\documentclass{article} 
\usepackage{iclr2025_conference,times}


\usepackage{amsmath,amsfonts,bm}









\def\eqref#1{equation~\ref{#1}}









\def\1{\bm{1}}










\DeclareMathAlphabet{\mathsfit}{\encodingdefault}{\sfdefault}{m}{sl}
\SetMathAlphabet{\mathsfit}{bold}{\encodingdefault}{\sfdefault}{bx}{n}













\usepackage{url}
\usepackage{amsthm}
\newtheorem{theorem}{Theorem}
\usepackage{amsmath}
\usepackage{graphicx}
\usepackage{booktabs}

\usepackage{xcolor}
\usepackage{wrapfig}

\definecolor{cvprblue}{rgb}{0.21,0.49,0.74}
\usepackage[pagebackref,breaklinks,colorlinks,citecolor=cvprblue]{hyperref}

\definecolor{electricindigo}{rgb}{0.44, 0.0, 1.0}
\usepackage{multirow}

\definecolor{deblue}{RGB}{11,132,147}
\definecolor{ocra}{RGB}{204, 119, 34}
\usepackage{enumitem}
\usepackage{tikz}
\newcommand{\fcircle}[2][red,fill=red]{\tikz[baseline=-0.5ex]\draw[#1,radius=#2] (0,0.03) circle ;}

\usepackage{MnSymbol,bbding,pifont}

\definecolor{deblue}{RGB}{11,132,147}
\definecolor{ocra}{RGB}{204, 119, 34}
\definecolor{electricindigo}{rgb}{0.44, 0.0, 1.0}
\usepackage[capitalize]{cleveref}
\crefname{section}{Sec.}{Secs.}
\Crefname{section}{Section}{Sections}
\Crefname{table}{Table}{Tables}
\crefname{table}{Tab.}{Tabs.}
\newtheorem{lemma}[theorem]{Lemma}
\definecolor{indigo(web)}{rgb}{0.29, 0.0, 0.51}
\usepackage{multirow}
\usepackage{xcolor}
\definecolor{darkblue}{RGB}{40,40,85}
\definecolor{babyblue}{rgb}{0.54, 0.81, 0.94}
\definecolor{pearDark}{HTML}{2980B9}
\definecolor{pearDarker}{HTML}{1D2DEC}
\usepackage[capitalize]{cleveref}
\crefname{section}{Sec.}{Secs.}
\Crefname{section}{Section}{Sections}
\Crefname{table}{Table}{Tables}
\crefname{table}{Tab.}{Tabs.}
\usepackage{fontawesome5}
\newtheorem{proposition}[theorem]{Proposition}
\usepackage{multirow}
\usepackage{colortbl}
\usepackage{tabulary}
\usepackage{etoolbox}
\usepackage{tikz}
\usepackage{pifont}
\makeatletter
\@namedef{ver@everyshi.sty}{}
\makeatother
\usepackage{tikz}
\usepackage{pifont}

\title{PDE Solvers Should Be Local: Fast, Stable Rollouts with Learned Local Stencils}


\author{Antiquus S.~Hippocampus, Natalia Cerebro \& Amelie P. Amygdale \thanks{ Use footnote for providing further information
about author (webpage, alternative address)---\emph{not} for acknowledging
funding agencies.  Funding acknowledgements go at the end of the paper.} \\
Department of Computer Science\\
Cranberry-Lemon University\\
Pittsburgh, PA 15213, USA \\
\texttt{\{hippo,brain,jen\}@cs.cranberry-lemon.edu} \\
\And
Ji Q. Ren \& Yevgeny LeNet \\
Department of Computational Neuroscience \\
University of the Witwatersrand \\
Joburg, South Africa \\
\texttt{\{robot,net\}@wits.ac.za} \\
\AND
Coauthor \\
Affiliation \\
Address \\
\texttt{email}
}

%

\begin{document}

\iclrfinalcopy 
\author{Chun-Wun~Cheng$^{1}$, Bin Dong$^{2,3}$, Carola-Bibiane~Sch\"onlieb$^{1}$, Angelica~I~Aviles-Rivero$^{4}$\thanks{Corresponding author.} \\
\vspace{-0.1in}
\\
$^1$Department of Applied Mathematics and Theoretical Physics, University of Cambridge \\$^2$Beijing International Center for Mathematical Research
and the New Cornerstone \\Science Laboratory, Peking University\\ $^3$Center for Machine Learning Research, Peking University, \\$^4$Yau Mathematical Sciences Center, Tsinghua University \\
\footnotesize
\texttt{cwc56}@cam.ac.uk, dongbin@math.pku.edu.cn, cbs31@cam.ac.uk, aviles-rivero@tsinghua.edu.cn}

\maketitle

\begin{abstract}
Neural operator models for solving partial differential equations (PDEs) often rely on global mixing mechanisms—such as spectral convolutions or attention—which tend to oversmooth sharp local dynamics and introduce high computational cost. We present FINO, a finite-difference–inspired neural architecture that enforces strict locality while retaining multiscale representational power. FINO replaces fixed finite-difference stencil coefficients with learnable convolutional kernels and evolves states via an explicit, learnable time-stepping scheme.  A central Local Operator Block leverage a differential stencil layer, a gating mask, and a linear fuse step to construct adaptive derivative-like local features that propagate forward in time. Embedded in an encoder–decoder with a bottleneck, FINO captures fine-grained local structures while preserving interpretability. We establish (i) a composition error bound linking one-step approximation error to stable long-horizon rollouts under a Lipschitz condition, and (ii) a universal approximation theorem for discrete time-stepped PDE dynamics. (iii) Across six  benchmarks and a climate modelling task, FINO achieves up to 44\% lower error and up to around 2× speedups over state-of-the-art operator-learning baselines, demonstrating that strict locality  with learnable time-stepping yields an accurate and scalable foundation for neural PDE solvers.
\end{abstract}

\section{Introduction}
Partial Differential Equations (PDEs) are fundamental to applied mathematics and engineering, governing phenomena in fluid dynamics \citep{john1995computational}, heat conduction, electromagnetism, structural mechanics, and biology \citep{edelstein2005mathematical}. They describe the evolution of state variables in space and time, enabling prediction, control, and optimisation of complex systems. Analytic solutions for non-linear PDEs are rare; therefore, numerical methods such as finite difference, finite element \citep{quarteroni2006numerical} and finite volume have been developed. 

Classical solvers have been studied for more than a century, but remain limited by the trade-off between accuracy and efficiency \citep{leveque2007finite}. Finer discretisations improve accuracy but increase computational cost. Deep learning offers a new route, with two dominant directions. (i)~\textbf{Physics-informed methods} such as PINNs \citep{raissi2019physics} embed PDE equations directly into the loss, leveraging physical structure to reduce labeled data requirements. However, they often face optimisation difficulties, especially in high-dimensional settings, and perform poorly on solutions with discontinuities, sharp gradients, or symmetries.
(ii) \textbf{Operator-learning} methods approximate infinite-dimensional operators from data, learning families of PDEs. Training is expensive, but inference is fast. DeepONet \citep{lu2019deeponet} and FNO \citep{li2020fourier} are notable examples.  

Global operator methods such as FNO capture long-range structure but oversmooth local dynamics. \textbf{Locality is critical}: for hyperbolic PDEs, finite-speed propagation implies that solutions depend only on data within local characteristic regions \citep{levequenumerical}. Local kernels can represent these dynamics efficiently, whereas expressing them globally increases parameter counts. Although spectral methods like FNO and SFNO can, in theory, approximate local behavior, the uncertainty principle forces high parameter counts when representing fine-scale features.

While local operators can be represented using global bases (e.g., Fourier or attention), this comes at a high cost. Capturing localised features requires resolving high frequencies across the entire domain, and the uncertainty principle implies that such global approximations demand dense spectral representations. Methods like FNO and SFNO must reconstruct the full signal in frequency space, even when the operator is inherently local—leading to high parameter counts and loss of a locality bias that many PDEs naturally exhibit.

Recent advances in operator learning follow four main directions, each with trade-offs. (i) Transformer-based operators capture long-range dependencies via self-attention, but incur quadratic complexity and offer limited operator-theoretic guarantees.
(ii) Graph Neural Operators (GNOs) \citep{li2020neural} 
learn local kernels, but pairwise kernel evaluation is expensive compared to optimised convolutions, and lacks natural equivariance.
(iii) Hybrid U-former designs \citep{wen2022u} inject local priors into global architectures by combining CNNs and FNOs, but suffer from high training overhead and fixed-resolution bottlenecks that suppress high-frequency content.
(iv) Localised-kernel operators \citep{liu2024neural} 
extend FNOs with differential and discretisation-agnostic branches (e.g., DISCO), improving local expressivity at the cost of increased computation, resolution-sensitive runtime, and stacked discretisation error.
These limitations highlight the need for architectures with an intrinsic local bias that remain efficient and accurate. The central challenge is to design models that capture fine local features via strictly local receptive fields—without sacrificing speed or scalability.
Such methods are essential for: (1) \textbf{Fast computation.} Many PDE applications—such as weather forecasting, robotics, and digital-twin systems—require frequent retraining or real-time inference. This makes fast, efficient computation essential for practical deployment.  
(2) \textbf{Capturing local properties} Many PDEs inherently rely on local interactions, and failing to capture these can significantly degrade accuracy. (3) \textbf{Generality} Global operator methods often perform poorly on time-independent PDEs, limiting their applicability. (4) \textbf{Accuracy} High precision approximation of fine-scale structures is necessary to ensure predictive fidelity across diverse PDE families.

\textbf{Contributions.}   We propose \textbf{Finite-difference inspired Neural Operator (FINO)}, a neural architecture explicitly inspired by classical finite-difference (FD) schemes. Like traditional FD methods, FINO discretises the domain into local stencils, but replaces fixed coefficients with learnable convolutional kernels. At its core, a \emph{Local Operator Block} learns differentiable operators on each stencil, enforcing a strictly local receptive field and providing a direct correspondence to finite-difference operators. These learned derivatives are advanced in time using an explicit, learnable time-stepping update, ensuring interpretability and stability. The resulting FINO block is embedded within an encoder–decoder structure with down-sampling, skip connections, and up-sampling, enabling the model to capture multiscale features and reconstruct complete solution trajectories. Unlike recent \textit{local-operator} networks, FINO maintains strict locality through stencil-style derivatives and explicit time-stepping. This avoids global spectral transforms and preserves both speed and interpretability. Across benchmarks, this design FINO consistently yields faster training, lower inference time, and higher accuracy than state-of-the-art operator-learning methods. Notable, we emphasise:

\fcircle[fill=deblue]{2pt} We introduce FINO, which replaces fixed stencil coefficients with learnable convolutional kernels and couples them with an explicit, learnable time-stepping scheme, enforcing strictly local receptive fields while preserving multiscale capacity.

\fcircle[fill=deblue]{2pt}  We prove that FINO is a universal approximator for discrete-time PDE dynamics. We also derive a novel error-propagation bound, showing how local approximation error controls long-horizon rollout stability under mild Lipschitz conditions.

\fcircle[fill=deblue]{2pt}We validate FINO on six PDEBench benchmarks (1D advection, diffusion–reaction, compressible Navier–Stokes; 2D Darcy flow, diffusion–reaction, shallow water) and a climate modelling task, where it achieves higher accuracy and significantly faster training than state-of-the-art baselines.

\section{Related Work}

\fcircle[fill=ocra]{2pt} \textbf{Deep Learning for PDEs.}
Early deep learning approaches for PDEs, such as PINNs \citep{raissi2019physics}, embed governing equations into the loss function but struggle with generalisation across resolutions and domains. Neural operators address this limitation by learning mappings between function spaces, yielding discretisation-invariant surrogates. 
DeepONet \citep{lu2019deeponet} and FNO \citep{li2020fourier} laid the groundwork for neural operators, providing universal approximation guarantees and efficient spectral modelling of global dependencies.

Building on these foundational works, several FNO variants \citep{tran2021factorized,xiao2024amortized,park2025enhancing} have been proposed to further enhance performance, with extensions to adaptive and geometry-aware settings. 
In parallel, graph-based methods such as the Graph Neural Operator \citep{li2020neural,brandstetter2022message} generalise operator learning to non-regular grids, while other directions exploit wavelets, spectral bases, or integral kernels for multiscale and discretisation-robust approximations \citep{tripura2022wavelet,fanaskov2023spectral}.

More recently, attention-based architectures have become increasingly popular in operator learning. Transformer-style models such as OFormer \citep{li2022transformer}, GNOT \citep{hao2023gnot}, and Transolver \citep{wu2024transolver} adapt self-attention mechanisms to capture long-range spatial interactions in PDEs.
Within this landscape, HAMLET \citep{bryutkin2024hamlet} extends neural operators to irregular geometries through graph transformers, radius-based neighborhoods, and cross-attention for arbitrary queries, achieving strong results on PDEBench and Airfoil benchmarks.
The Mamba Neural Operator \citep{cheng2024mamba} offers a state-space alternative to transformers, improving computational efficiency while retaining accuracy.
Therefore, these developments allow the trajectory from grid-restricted spectral methods toward geometry-flexible, transformer-style operator learners with increasing scalability and robustness.

\fcircle[fill=ocra]{2pt}\textbf{ Local Neural Operators.} Early works such as PDE-Net \citep{long2018pde} and PDE-Net 2.0 \citep{long2019pde} introduced learnable kernels to directly identify underlying PDEs from data.
However, these models were not intended for solving PDEs directly—instead, they focused on identifying the underlying PDE equations from observed data.
%
%
More recent efforts emphasize incorporating locality into operator architectures to better resolve fine-scale dynamics and enforce physical inductive biases.
While global methods such as FNO demonstrate strong performance in many settings, they often suffer from over-smoothing and fail to resolve local details. To address these limitations, local neural operators have been proposed, introducing locally supported kernels that align with the inherent locality of many PDEs, such as hyperbolic systems with finite propagation speeds. For example, \citet{ye2024locality,ye2022learning} augmented FNOs with convolutional layers to embed locality, while \citet{wen2023real} combined U-Nets with FNOs for improved multi-scale representations. Similarly, convolutional neural operators (CNOs) \citep{raonic2023convolutional} exploit convolutional inductive biases but remain constrained by equidistant grid discretisations and the need for downsampling, which risks discarding high-frequency information. To overcome this issue, \citet{liu2024neural} introduced localised integral and differential kernels, further enhancing performance. However, most of these approaches rely on hybrid architectures that combine local and global operators, which increases the training time. 

\textbf{Existing works \& comparison to ours.} Our work (FINO)  is strictly local by construction. It eliminates global mixing entirely, using only finite-support convolutional stencils and an explicit, learnable time-stepping scheme. This design enforces compact receptive fields, aligns naturally with the finite-speed propagation of many PDEs (e.g., hyperbolic systems), and avoids the spectral inefficiencies and high parameter counts induced by the uncertainty principle.
\vspace{-5pt}

\section{Finite-difference Inspired Neural Operator (FINO)}
\textbf{Problem Statement.} We consider time-dependent partial differential equations (PDEs) whose solutions are vector-valued functions  
$\mathbf{v} \colon \mathcal{T} \times \mathcal{S} \times \Theta \to \mathbb{R}^{d},$
where $\mathcal{T} \subset \mathbb{R}$ denotes time, $\mathcal{S} \subset \mathbb{R}^{n}$ is a spatial domain, and $\Theta$ represents a space of PDE-specific parameters (e.g., coefficients, boundary conditions). For example, in a heat conduction problem, $\mathbf{v}(t, \mathbf{s}, \theta)$ may denote temperature at time $t$, location $\mathbf{s} \in \mathcal{S}$, under conductivity profile $\theta \in \Theta$. We define the forward operator $\mathcal{F}_\theta$ as a parameterised time evolution map that advances the solution by one time step: $\mathcal{F}_\theta \colon \mathbf{v}(t-\ell{:}t-1, \cdot) \mapsto \mathbf{v}(t, \cdot)$, where $\ell$ is the number of previous steps required to estimate temporal derivatives (e.g., for explicit schemes). The discretised version $\mathring{\mathcal{F}}_\theta$ is obtained via a high-resolution numerical solver, such as finite difference or finite volume methods.

Our goal is to learn a data-driven emulator $\widehat{\mathcal{F}}_{\theta,\phi} \approx \mathring{\mathcal{F}}_\theta$, parameterised by $\phi$, that generalises across different PDE parameters $\theta$. Given a dataset of $K$ simulated solution trajectories
$
\mathcal{D} = \left\{ \mathbf{v}^{(k)}_{\theta_k}(0{:}t_{\max}, \cdot) \right\}_{k=1}^{K},
$
we estimate $\phi$ by minimising a supervised rollout loss that penalises discrepancies between the predicted and ground-truth states:
\[
\widehat{\phi} 
= \arg\min_{\phi} 
\sum_{k=1}^{K} \sum_{t=1}^{t_{\max}} 
\mathcal{L}\left(
\widehat{\mathcal{F}}_{\theta_k,\phi}\big\{ \mathbf{v}^{(k)}_{\theta_k}(t-\ell{:}t-1, \cdot) \big\},
\,\mathbf{v}^{(k)}_{\theta_k}(t, \cdot)
\right),
\]
where $\mathcal{L}(\cdot, \cdot)$ is a suitable loss function (e.g., mean squared error) comparing predicted and true solutions over the spatial domain. Here, \(\{t - \ell, \dotsc, t - 1\}\) denotes the past \(\ell\) time steps used as input to predict the next state at time \(t\).

\subsection{The Anatomy of FINO}
{FINO} is motivated by classical finite-difference (FD) schemes and consists of four main components: (1) a Local Operator Block that mimics local differential operators, (2) an explicit time-stepping update, (3) composable neural blocks forming the operator core, and (4) a U-Net–style encoder–decoder for multiscale modelling. Figure~\ref{teaser} provides a visual overview. The architectural flow is detailed next.

\textbf{Local Operator Block (LOB).} The Local Operator Block (LOB) is designed to approximate spatial differential operators in a strictly local and learnable manner. It draws direct inspiration from classical finite-difference (FD) methods, where spatial derivatives are estimated using fixed-weight stencils. In contrast, LOB replaces these fixed coefficients with learnable convolutional filters, enabling adaptive, data-driven computation of local derivatives.

The LOB consists of three main components: a learned stencil operation, a gating mechanism, and a fusion step.  The first component performs a learnable convolution that mimics finite-difference stencils. Given a 4D input tensor \(\mathbf{u} \in \mathbb{R}^{b \times i \times H \times W}\), where \(b\) is the batch size, \(i\) is the number of input channels, and \(H \times W\) is the spatial domain, we define the stencil operator \(S(\mathbf{u})\) as:
\begin{equation}
S(\mathbf{u})
= \sum_{c=1}^{i}
\sum_{p=-r}^{r}
\sum_{q=-r}^{r}
w_{\alpha, c, p, q} \; \mathbf{u}_{b, c, (h+p), (w+q)} + b_s
\end{equation}
This equation defines a convolution with a \((2r+1)\times(2r+1)\) kernel centered at each spatial location \((h, w)\), where \(r\) is the stencil radius. The weights \(w_{\alpha, c, p, q}\) are learnable parameters, and \(b_s\) is a bias term. Compared to traditional finite-difference methods (which use fixed stencils like \([1, -2, 1]\)), this formulation enables the network to learn its own approximation of local derivatives from data.

Not all local derivatives are equally relevant for solving a given PDE. To adaptively select the most informative derivative features, we introduce a gating mask, computed as:
\begin{equation}
G(\mathbf{u}) = \sigma\bigl(W_g \ast S(\mathbf{u})\bigr) \odot S(\mathbf{u})
\end{equation}
Here, \(W_g\) is a learnable convolutional filter, \(\ast\) denotes convolution, and \(\sigma(\cdot)\) is the sigmoid function. The gating mechanism produces a per-location, per-channel importance mask (values in \([0,1]\)) that modulates the stencil output via element-wise multiplication (\(\odot\)). This allows the network to suppress irrelevant or noisy derivative responses and focus on spatial patterns most useful for the current PDE.

The final output of the LOB is computed by fusing the gated features using another learnable convolution:
\begin{equation}
\partial_t U_t = W_c \ast G(\mathbf{u})
\end{equation}

This step linearly combines the gated stencil responses to form the estimated spatial derivative \(\partial_t U_t\), which acts as the core update direction for the PDE. The learnable fusion allows the model to leverage directional or cross-derivative terms and synthesise them into a cohesive output.

\textbf{Time Integration Scheme.} To advance the solution in time, we adopt an explicit forward Euler scheme:
\begin{equation}
U_{t+\Delta t} = U_t + \Delta t\,\partial_t U_t,\quad \Delta t = \theta\in \mathbb{R}_{>0}
\end{equation} 
where $\partial_t U_t$ is the learned update produced by the previous local operator block, and $\Delta t$ is a learnable scalar parameter shared across the domain. The time-step parameter is initialised to a small positive value and updated during training. This formulation preserves the causal structure of temporal evolution and allows the network to learn time dynamics explicitly.
Unlike autoregressive models that directly regress the next frame, this update mimics classical numerical solvers by estimating a derivative and applying it via time integration, offering interpretability and numerical grounding. 

\begin{wrapfigure}{l}{0.4\textwidth}
    \centering
    \label{teaser}
    \includegraphics[width=1\linewidth]{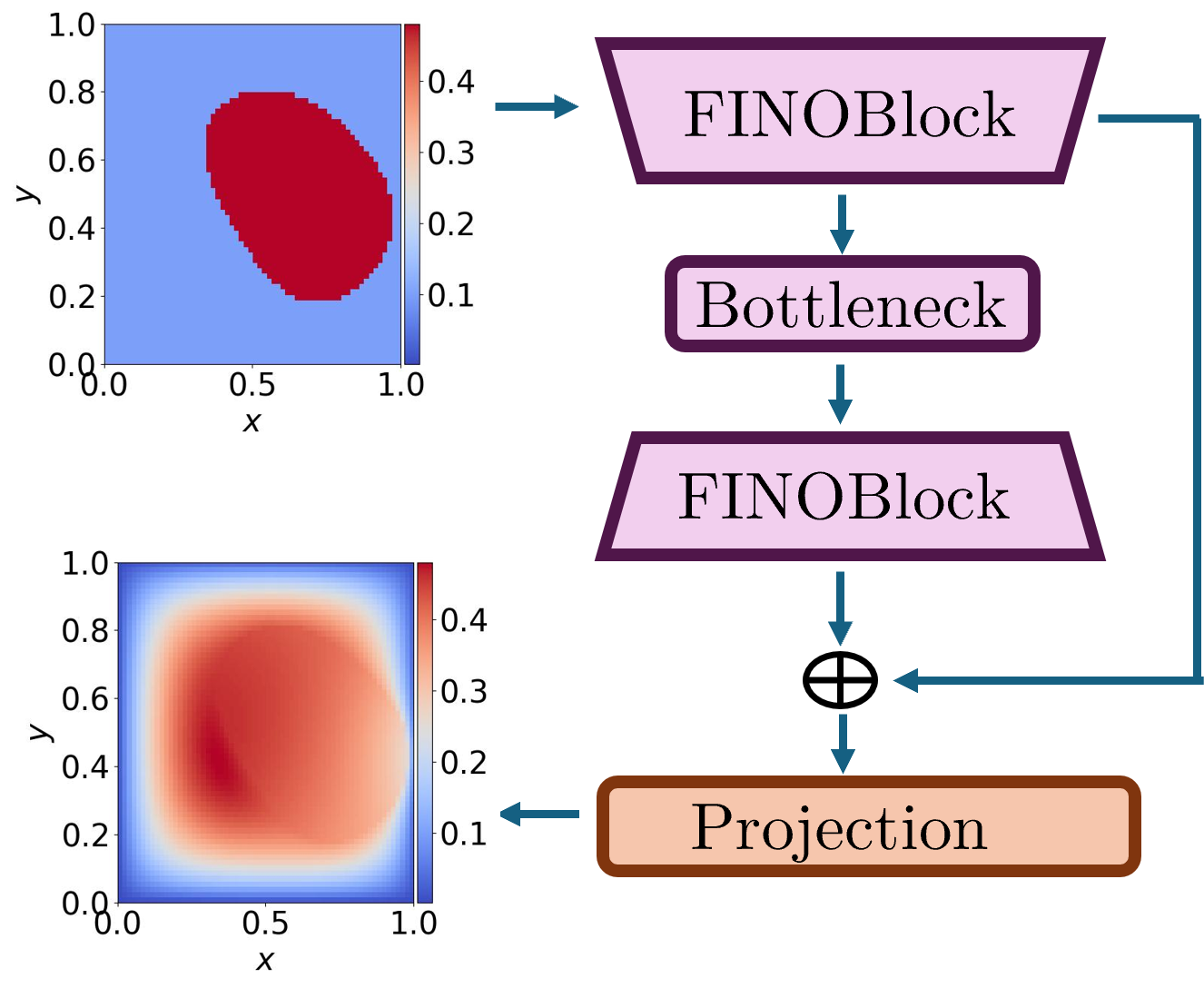}
    \caption{ \textbf{FINO Framework.} Encoders down-sample and decoders up-sample, with skip connections preserving original feature information.}  \vspace{0.1cm}
\end{wrapfigure}
\textbf{FINO Block.} The FINO Block encapsulates the explicit time update from the previous step and enhances it through nonlinear local transformations. Specifically, given the updated state from time evolution,
we apply a convolutional  layer followed by a ReLU activation:
\begin{equation}
\mathcal{B}(U_t) =\;
\operatorname{ReLU}\!\Bigl(
  W_{p} * \bigl[\,U_{t} + \Delta t\,\partial_{t}U_{t}\bigr]
\Bigr).
\end{equation}
where $W_p$ is a learnable convolutional kernel.  While the optimal depth is PDE-dependent, we find that using up to three stacked FINO blocks provides sufficient capacity for the range of PDEs considered. To increase modelling capacity, we stack multiple such blocks:
\begin{equation}
\mathrm{FINOBlockStack}(u)
\;=\;
\mathcal B\circ\cdots\circ\mathcal B
\;\circ\;\mathcal B
\;(u)
\end{equation}

\textbf{FINO Architecture.} To capture both fine-grained local patterns and broader contextual information, FINO adopts a U-Net–style encoder–decoder architecture.

The encoder progressively downsamples the feature maps using:
\begin{equation}
\mathbf{x}_{i+1} = \mathcal{P}_\downarrow(\mathcal{D}_i(\mathbf{x}_i))
\end{equation}
where \( \mathcal{D}_i \) applies FINO Blocks and \( \mathcal{P}_\downarrow \) is a downsampling operation such as average pooling. At the bottleneck, we compute:
\begin{equation}
\mathbf{z} = \mathcal{K}(\mathbf{x}_N), \quad \text{with} \quad \mathcal{K} := \text{FINOBlockStack}
\end{equation}
The decoder then upsamples and fuses features from the encoder via skip connections:
\begin{equation}
\mathbf{x}_{i-1} = \mathcal{U}(\mathbf{z}) + \mathbf{x}_i, \quad \text{for } i = N, \dots, 1
\end{equation}

Finally, the output is projected back to the desired dimensionality using a \(1 \times 1\) convolution:

$
\mathbf{y} \in \mathbb{R}^{B \times H \times W \times \text{out\_channels}}
$

\subsection{Theoretical Foundations of FINO}
In this section, we first demonstrate that FINO is a universal approximator and discuss its significance. To establish this result, we introduce the following lemma and proposition. The complete proof and statement of Proposition 1 and Theorem 2 can be found in the Appendix \ref{prove} . 
\begin{proposition} [Informal, Local-to-Global Error Bound]
If a surrogate map $\Psi_\theta$ uniformly approximates the true PDE one-step map 
$\Phi_{\Delta t}$ within tolerance $\varepsilon'$, i.e.
\begin{equation}
\|\Psi_\theta(u) - \Phi_{\Delta t}(u)\| \;\le\; \varepsilon' \qquad \forall u,
\end{equation}
and if $\Phi_{\Delta t}$ is Lipschitz with constant $C$, then after $K$ time steps we have
\[
\bigl\|(\Psi_\theta)^K(u_0) - (\Phi_{\Delta t})^K(u_0)\bigr\|
\;\le\;
\begin{cases}
\dfrac{C^K - 1}{C - 1}\,\varepsilon', & \quad C \neq 1, \\[8pt]
K\,\varepsilon', & \quad C = 1 .
\end{cases}
\]
\label{Prop1}
\end{proposition}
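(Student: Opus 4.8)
The plan is to reduce the $K$-step discrepancy to a scalar one-step recurrence and then solve that recurrence in closed form. Write the two trajectories as $a_k = (\Psi_\theta)^k(u_0)$ and $b_k = (\Phi_{\Delta t})^k(u_0)$, and set $e_k = \|a_k - b_k\|$, so that $e_0 = 0$ and the quantity to bound is exactly $e_K$. The single idea that drives everything is a telescoping insertion: at each step I compare $\Psi_\theta(a_{k-1})$ against $\Phi_{\Delta t}(b_{k-1})$ by passing through the mixed intermediate point $\Phi_{\Delta t}(a_{k-1})$, which lets me split the new error into a freshly introduced one-step term plus a propagated term.

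Concretely, the triangle inequality applied to this insertion gives
\[
e_k
= \bigl\|\Psi_\theta(a_{k-1}) - \Phi_{\Delta t}(b_{k-1})\bigr\|
\le \bigl\|\Psi_\theta(a_{k-1}) - \Phi_{\Delta t}(a_{k-1})\bigr\|
+ \bigl\|\Phi_{\Delta t}(a_{k-1}) - \Phi_{\Delta t}(b_{k-1})\bigr\|.
\]
The first summand is bounded by $\varepsilon'$ using the uniform one-step approximation hypothesis evaluated at the input $u = a_{k-1}$, and the second summand is bounded by $C\,\|a_{k-1}-b_{k-1}\| = C\,e_{k-1}$ using the Lipschitz property of $\Phi_{\Delta t}$. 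This yields the linear recurrence $e_k \le \varepsilon' + C\,e_{k-1}$ with $e_0 = 0$, which I would close by a one-line induction.

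Unrolling the recurrence produces a truncated geometric series, $e_K \le \varepsilon' \sum_{j=0}^{K-1} C^{j}$, and summing it gives the two stated cases: $\tfrac{C^{K}-1}{C-1}\,\varepsilon'$ when $C \neq 1$, and the degenerate limit $K\,\varepsilon'$ when $C = 1$. I expect no serious obstacle here; the only point demanding care is that the one-step error bound must be applied at the surrogate's \emph{own} iterate $a_{k-1}$ rather than along the true trajectory, which is precisely why the hypothesis is stated uniformly for all $u$. A bound holding only on the exact orbit would not suffice to control the first summand above, so I would make sure to flag the role of this uniformity explicitly when writing out the induction.
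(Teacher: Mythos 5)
Your proposal is correct and follows essentially the same route as the paper's proof: the paper also inserts the mixed term $\Phi_{\Delta t}(u_k)$ (the true map applied to the surrogate iterate), applies the triangle inequality and the Lipschitz bound to obtain the recurrence $e_{k+1} \le C\,e_k + \varepsilon'$, and resolves it by induction into the truncated geometric series (its Lemma~2). Your explicit remark that the uniform one-step bound must be invoked at the surrogate's own iterate is a point the paper uses implicitly but does not flag.
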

\begin{theorem}[Universal Approximation of FINO for Discrete Time--Stepped PDE Dynamics ]
\label{thm:fdnet-universal}
For a final time $T = K \Delta t$ with integer $K \ge 1$, define the exact solution after $K$ steps as
\[
u(T) = \bigl(\Phi_{\Delta t}\bigr)^K(u_0), 
\]
where $(\Phi_{\Delta t})^K$ denotes the $K$--fold composition.  

Then for every compact set $\mathcal{U} \subset X$ and every tolerance $\varepsilon>0$, there exists a depth--$K$ FD--NET
\[
\Psi_\theta^{(K)} \;:=\; \underbrace{\Psi_\theta \circ \cdots \circ \Psi_\theta}_{K \text{ identical blocks}},
\]
such that
\[
\sup_{u_0 \in \mathcal{U}} 
\;\bigl\|\,\Psi_\theta^{(K)}(u_0) - \bigl(\Phi_{\Delta t}\bigr)^K(u_0)\bigr\|
\;\le\; \varepsilon.
\]
\label{Theo1}
\end{theorem}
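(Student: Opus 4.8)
The plan is to reduce the $K$-step approximation claim to a single-step statement and then invoke the error-propagation bound of Proposition~\ref{Prop1}. Concretely, I would first show that one FINO block $\Psi_\theta$ can approximate the discrete one-step map $\Phi_{\Delta t}$ to any tolerance $\varepsilon'$ uniformly on a suitable compact set, and then choose $\varepsilon'$ small enough that the accumulation of error over $K$ steps stays below $\varepsilon$. Since the architecture in the theorem is the $K$-fold composition of \emph{identical} blocks, it coincides with the object $(\Psi_\theta)^K$ in Proposition~\ref{Prop1}, so once the single-step bound is in place the multi-step bound is immediate: when $C \neq 1$ take $\varepsilon' = \varepsilon\,(C-1)/(C^K-1)$, and when $C = 1$ take $\varepsilon' = \varepsilon/K$, which makes the right-hand side of Proposition~\ref{Prop1} equal to $\varepsilon$.

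The core step is the single-step universal approximation. Here I would argue that the FINO block family is dense in the space of continuous maps on the relevant compact domain. The block has the residual form $u \mapsto \mathcal{B}(u + \Delta t\,\partial_t u)$, where $\partial_t u$ is produced by the Local Operator Block (a learnable convolutional stencil, a sigmoid gate, and a linear fusion convolution) and $\mathcal{B}$ is a ReLU-activated convolution inside the U-Net encoder--decoder. The key observation is that this composition contains, as a special case, a standard convolutional ReLU network: the gate can be driven close to $1$ so that it passes the stencil response through, while the residual term $\Delta t\,\partial_t u$ together with the fusion and $\mathcal{B}$ layers realises an arbitrary continuous local correction. Appealing to universal approximation for convolutional/operator networks on compact sets, the correction can be tuned so that $\|\Psi_\theta(u) - \Phi_{\Delta t}(u)\| \le \varepsilon'$ uniformly, using that $\Phi_{\Delta t}$ is continuous (indeed Lipschitz with constant $C$).

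A technical subtlety I would treat carefully is the \emph{domain of the iterates}: although $u_0$ ranges over $\mathcal{U}$, the approximate iterates $(\Psi_\theta)^j(u_0)$ need not remain in $\mathcal{U}$, so the single-step bound must hold on a larger set. I would fix a compact tube $\widetilde{\mathcal{U}}$ containing a closed $\rho$-neighbourhood of the reachable set $\bigcup_{j=0}^{K-1}(\Phi_{\Delta t})^j(\mathcal{U})$, establish the single-step bound on $\widetilde{\mathcal{U}}$, and then run the telescoping triangle-inequality induction underlying Proposition~\ref{Prop1} to show that each approximate iterate stays within $\rho$ of the true iterate, hence inside $\widetilde{\mathcal{U}}$, as long as $\varepsilon'$ is also small enough that the total accumulated error never exceeds $\rho$. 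Taking $\varepsilon'$ to be the minimum of this containment threshold and the value dictated by the target $\varepsilon$ closes the argument.

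The main obstacle I expect is the single-step density claim for this constrained family: one must verify that restricting to finite-support (radius-$r$) stencils together with the specific gate/fusion/residual composition does not destroy universality. The resolution I would pursue is that the U-Net downsampling and the stacking of blocks enlarge the effective receptive field, so that a finite but sufficiently large receptive field suffices to approximate the discretised one-step map $\Phi_{\Delta t}$, which acts on a fixed spatial grid; combined with the continuity of $\Phi_{\Delta t}$, this yields the required uniform approximation on $\widetilde{\mathcal{U}}$ and completes the reduction.
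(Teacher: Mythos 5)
Your proposal follows essentially the same route as the paper's proof: reduce to a one-step uniform approximation on a compact set (obtained from the classical universal approximation theorem, using that the state space $X$ is finite-dimensional and $\Phi_{\Delta t}$ is continuous), feed it into the recursion $e_{k+1} \le C\,e_k + \varepsilon'$, apply the geometric-sum lemma, and calibrate $\varepsilon' = \varepsilon\,(C-1)/(C^K-1)$ (or $\varepsilon/K$ when $C=1$) — this is exactly the paper's Steps 2--6, including the compact reachable set $\mathcal{V}=\bigcup_{j=0}^{K-1}(\Phi_{\Delta t})^j(\mathcal{U})$. The one substantive difference is your treatment of the domain of the iterates, and there your argument is actually \emph{tighter} than the paper's: the paper proves $\sup_{u\in\mathcal{V}}\|\Psi_\theta(u)-\Phi_{\Delta t}(u)\|\le\varepsilon'$ only on the exact reachable set, yet in Step 4 it invokes this bound at the approximate iterates $u_k=(\Psi_\theta)^k(u_0)$, which need not lie in $\mathcal{V}$. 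Your compact $\rho$-tube $\widetilde{\mathcal{U}}$ around the reachable set, with the inductive containment argument (take $\varepsilon'$ small enough that the accumulated error never exceeds $\rho$, so each $u_k$ stays in $\widetilde{\mathcal{U}}$), repairs this gap; the Lipschitz half of the recursion needs no such care since the theorem assumes the bound on all of $X$. On the remaining point — whether the \emph{constrained} FINO block family (finite-radius stencils, sigmoid gate, fusion convolution, residual Euler update) is genuinely dense among continuous maps on compacta — your proposal sits at the same level of rigor as the paper, which simply asserts the block ``is a special case'' of a feed-forward map covered by Cybenko/Hornik; your receptive-field sketch via gating-to-identity and U-Net stacking is plausible but, like the paper's assertion, is stated rather than proved.
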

Proposition \ref{Prop1} is used to bridge the gap between \emph{local error} and \emph{global error} in the PDE approximation. While the classical universal approximation theorem guarantees that a neural network can approximate a single continuous map, such as the one-step evolution operator $\Phi_{\Delta t}$, to within some tolerance $\varepsilon'$, solving a PDE requires iterating this operator $K$ times to reach the final time $T = K\Delta t$. Without an additional control mechanism, these small local errors could accumulate and potentially blow up over multiple iterations. The proposition provides precisely this control by giving an error propagation formula: it shows that the global error after $K$ steps is bounded by a geometric factor depending on the Lipschitz constant $C$. This ensures stability of the approximation, meaning that as long as $C$ is moderate---or even less than one in the case of dissipative PDEs---the total accumulated error remains controlled rather than diverging with the number of steps. 

Theorem \ref{Theo1} --the composition error bound --elevates a one--step approximation guarantee into a long--horizon stability claim for FINO's autoregressive rollouts. 
Concretely, if the exact one--step propagator $\Phi_{\Delta t}$ is Lipschitz with constant $C$ and the learned surrogate $\Psi_{\theta}$ matches it within $\varepsilon'$, 
then after $K$ steps the total error is bounded by $\tfrac{C^{K}-1}{C-1}\,\varepsilon'$ (or $K\varepsilon'$ when $C=1$). 
This prevents small per--step discrepancies from snowballing, providing the missing theoretical link between minimizing the stepwise training loss
and achieving reliable multi--step predictions. 
In practice, it justifies the paper's explicit time--stepping with strictly local learned stencils: tightening the per--step fit provably tightens end--to--end trajectory error, thereby grounding the stable long--horizon rollouts in a precise stability mechanism.

\paragraph{Training Scheme.}
We adopt an autoregressive training scheme. At time $t$, we assemble features by concatenating the last $K$ ground-truth (or rolled) frames with the spatial coordinates $G(x,y)$, yielding
\begin{equation}
X_t(x,y) \;=\; \mathrm{concat}\!\big(u_{t-K+1:t}(x,y),\, G(x,y)\big)\;\in\;\mathbb{R}^{KV+2}.
\tag{11}
\end{equation}
A convolutional predictor $f_\theta$ maps these features to the next-frame estimate,
$\hat{u}_{t+1} \;=\; f_\theta\!\big(X_t\big)$.
Let $x^{(0)} = u_{0:K}$ denote the initial rolling buffer of $K$ frames. For $t=K,\ldots,K+H-1$ (horizon $H$), we iteratively predict and update the buffer via
\begin{equation}
\begin{aligned}
\hat{u}_{t} &= f_\theta\!\big(\mathrm{concat}(x^{(t-K)},\, G)\big),\\
x^{(t-K+1)} &= \mathrm{shift}\!\big(x^{(t-K)}\big)\,\Vert\, \hat{u}_{t},
\end{aligned}
\tag{13}
\end{equation}
where $\mathrm{shift}(\cdot)$ discards the oldest frame and appends the newest prediction (and $\Vert$ denotes concatenation along the time dimension).

\medskip
\noindent\textbf{Loss Function.}
We minimise the stepwise mean-squared error accumulated across the full rollout and backpropagate through the entire unroll:
\begin{equation}
\mathcal{L}_{\mathrm{step}}(\theta)
\;=\;
\sum_{t=K-1}^{K+H-1}
\frac{1}{B}\sum_{b=1}^{B}
\left\|
\hat{u}^{(b)}_{t+1}-u^{(b)}_{t+1}
\right\|_2^2,
\tag{14}
\end{equation}
where $B$ is the batch size and $\|\cdot\|_2$ denotes the Euclidean norm over all spatial and channel entries. For monitoring and model selection, we additionally report the full-trajectory MSE over the evaluation window,
\begin{equation}
\mathcal{L}_{\mathrm{full}}(\theta)
\;=\;
\frac{1}{B}\sum_{b=1}^{B}
\left\|
\hat{U}^{(b)}_{0:K+H} - U^{(b)}_{0:K+H}
\right\|_2^2,
\tag{15}
\end{equation}
where $U_{0:K+H}$ and $\hat{U}_{0:K+H}$ stack the ground-truth and predicted sequences, respectively. As per our setup, only $\mathcal{L}_{\mathrm{step}}$ contributes to the loss value, while $\mathcal{L}_{\mathrm{full}}$ is used for validation.

\section{Experiments}
We provide the details of the dataset, implementation, experiment results, and ablation studies.
\subsection{Dataset and implementation details }

\textbf{Datasets.} 
We conducted experiments on a broad range of PDEs drawn from the PDEBench ~\citep{takamoto2022pdebench} and Climte Modelling ~\citep{kissas2022learning}. Specifically, our evaluation covered 1D problems including advection, diffusion--reaction, and compressible Navier--Stokes equations, as well as 2D problems such as Darcy flow, diffusion--reaction, and shallow water equations. In addition, we evaluated our method on a  climate modelling dataset. Further details about the datasets are provided in the Appendix \ref{data}.  

\textbf{Baselines and Training details.} 
We compared our method against three representative categories of approaches:  (i) \textbf{Global operator methods:} FNO~\citep{li2020fourier}, FFNO~\citep{tran2021factorized}; (ii) \textbf{Local operator + Global methods:} U-Net~\citep{ronneberger2015u}, UFNO~\citep{wen2022u}, Local FNO~\citep{liu2024neural};
(iii) \textbf{Transformer-based methods:} Transolver~\citep{wu2024transolver}. All models were trained for 400 epochs using a single NVIDIA A100 40GB GPU and the climate modelling was trained for 500 epochs. We followed the default training protocols from PDEBench unless otherwise specified. We report the RMSE in Table \ref{tab:main_rmse}.

\begin{table*}[t!]
  \centering
  \caption{Performance Evaluation of FINO Against Global, Local, and Transformer-Based Operators on PDEBench Benchmarks and Climte Modelling.
           The RMSE reported the results. The best-performing results are highlighted in \colorbox[HTML]{BBFFBB}{green}, while the second-best results are indicated in bold.}
  \resizebox{\textwidth}{!}{
    \begin{tabular}{cccccccccc}
      \toprule
      Dataset & U\textendash Net & FNO & UFNO & FFNO & Transolver & LocalFNO &  & FNIO (ours) & Improvement \\ 
      \cmidrule{1-7}\cmidrule{9-10}
      Advection (1D)  & 0.05257 & \textbf{0.00530} & 0.00968 & 0.00683 & 
        0.00937 & - &  & \cellcolor[HTML]{BBFFBB}\textbf{0.00296} & 44.15 \% \\
      CNS (1D)  & 12.56934 & 0.34053 & 0.44692 & \textbf{0.29493} & 
        0.72094 & - &  & \cellcolor[HTML]{BBFFBB}\textbf{0.1946} & 34.02\% \\
       Diffusion-Reaction (1D) & 0.03812 & 0.02884 & 0.00891 & 
        0.01409 & \textbf{0.00686} & -- &  & \cellcolor[HTML]{BBFFBB}\textbf{0.00575} & 16.18\% \\

      \cmidrule{1-7}\cmidrule{9-10}
      Darcy Flow (2D)         & \cellcolor[HTML]{BBFFBB}\textbf{0.01117} & 0.02575 & 0.06929 & 0.02482 & 
        0.01987 & 0.02051       &  & \textbf{0.01158} & -3.67\% \\
      Diffusion Reaction (2D)    & 0.05979 & 0.01055 & 0.01443 & 0.01314 & 
        0.01485 & \textbf{0.00346}       &  & \cellcolor[HTML]{BBFFBB}\textbf{0.00214} & 38.15 \%  \\
      Shallow Water (2D)   & 0.11974 & 0.01448 & 0.01500 & 0.01191 & 
        0.00586 & \textbf{0.00438}       &  & \cellcolor[HTML]{BBFFBB}\textbf{0.00259} & 40.87 \% \\
     Climate Modelling (2D)   & 1.0126 & 0.01536 & \textbf{0.00810} & - & 
        0.18575 & 0.01508       &  & \cellcolor[HTML]{BBFFBB}\textbf{0.00715} & 11.73 \% \\
      \bottomrule
    \end{tabular}%
  }
  \label{tab:main_rmse}
\end{table*}

\begin{figure}[t!]
    \centering
    \includegraphics[width=\linewidth]{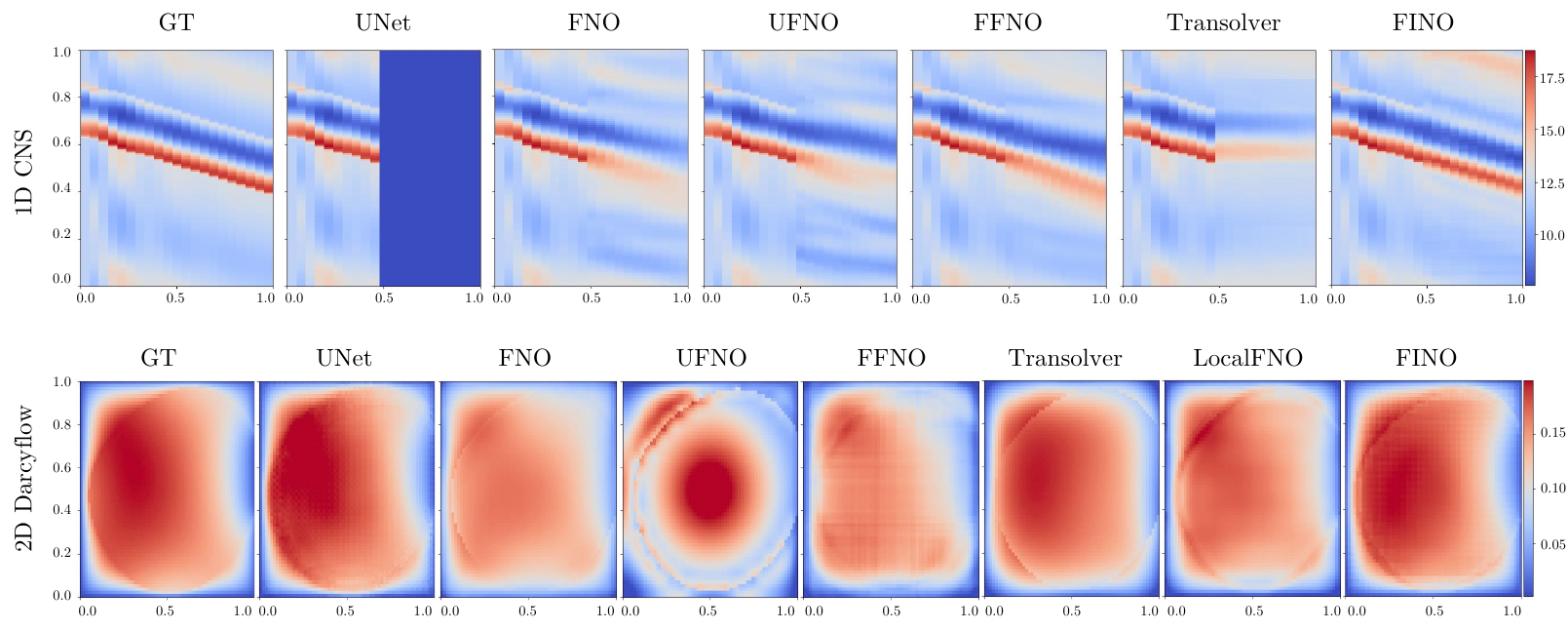}
    \caption{Visual comparison of baseline methods and FINO on 1D CNS and 2D Darcyflow.
    }
    \label{fig:fu1}
\end{figure}

\subsection{Experiment Results}
\paragraph{Numerical Results}

Table \ref{tab:main_rmse} shows numerical result in all datasets. For 1D PDEs, FINO consistently achieves the lowest error across all three datasets, reflecting its design as a local operator method. These problems are characterized by strong local structures, where pointwise updates are driven by local derivatives, and FINO benefits directly from its purely local design. In the Advection equation, which is dominated by sharp, locally transported features, FINO achieves the best RMSE (0.00296), improving on the strongest baseline (FNO, 0.00530) by 44.15\%. This substantial margin highlights the strong alignment between FINO’s locality and the underlying transport dynamics. In the more complex CNS dataset, FINO again obtains the lowest RMSE (0.1946), outperforming the best competing method (FFNO, 0.29493) by 34.02\%, indicating that local update rules remain advantageous even for challenging 1D fluid dynamics. Similarly, in the Diffusion–Reaction system, FINO achieves the best performance (0.00575), improving on Transolver (0.00686) by 16.18\%, demonstrating robustness in coupled local processes such as diffusion and reaction. Overall, across all 1D tasks, FINO delivers consistent gains, with the most pronounced improvements observed in problems governed by strong local transport mechanisms. Its finite-difference-inspired locality translates directly into more accurate step-by-step updates and significantly lower RMSE.

\begin{figure}[t!]
    \centering
    \includegraphics[width=0.9\linewidth]{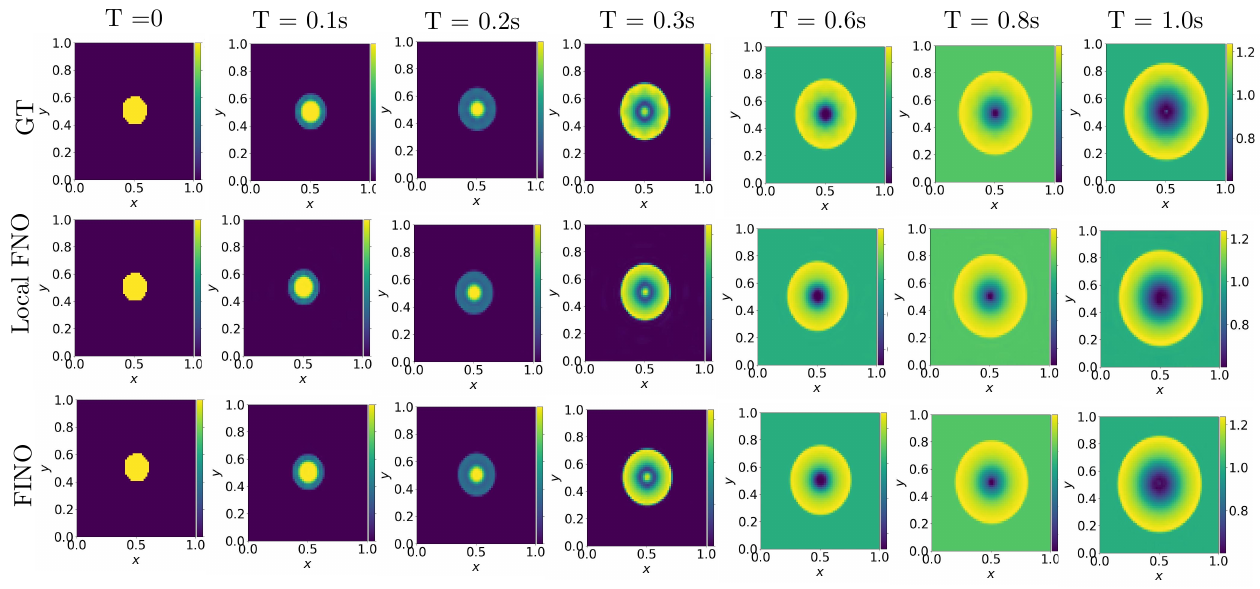}
    \caption{Spatiotemporal comparison of LocalFNO and FINO predictions on the 2D Shallow Water equation benchmark. Each row shows model rollouts at successive time steps.
    }
    \label{fig:sw}
\end{figure}

\begin{figure}[t!]
    \centering
    \includegraphics[width=\linewidth]{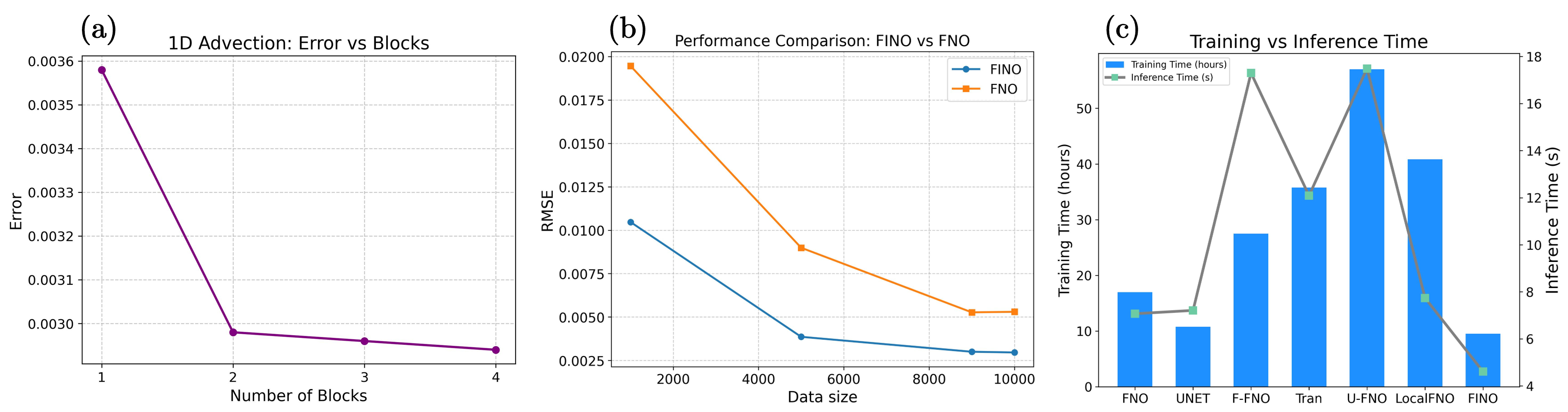}
    \caption{Comprehensive evaluation of FINO compared to baseline operator networks. (a): Error vs. number of composition blocks for the 1D advection task. (b): Data-scaling performance comparison of FINO and FNO. (c): Training vs. inference time across architectures.
    }
    \label{fig:abs}
\end{figure}
\vspace{-5pt}

For the 2D tasks, we distinguish between time-independent and time-dependent datasets. On the time-independent Darcy Flow benchmark, all global spectral/transform models underperform, while the purely local U-Net achieves the best RMSE (0.01117). FINO is very close to U-Net with 0.01158 (–3.67\% relative to U-Net), indicating that local operators are more effective than global ones in steady-state elliptic problems where global mixers tend to oversmooth fine-scale heterogeneity. In contrast, across all time-dependent 2D datasets, U-Net becomes the worst-performing baseline, while FINO consistently achieves the best results: Diffusion–Reaction (0.00214, 38.15\% better than LocalFNO) and Shallow Water (0.00259, 40.87\% better than LocalFNO). In the  climate modelling task, FINO (0.00715) outperforms UFNO (0.00810) by 11.73\% because climate evolution is dominated by local advection–diffusion updates. Global models tend to oversmooth sharp gradients, while FINO's local time stepping preserves fine-scale features and controls error growth. Unlike U-Net, which lacks an explicit temporal update, FINO structured locality ensures more stable and accurate rollouts in time-dependent dynamics. These results from 2D time-dependent PDE highlight that time-dependent dynamics benefit from operators that implement accurate local temporal updates grounded in PDE structure.  In Appendix \ref{numerical}, we report additional evaluation metrics. From the data perspective, we include normalized RMSE (nRMSE) and maximum error. From the physics perspective, we present the RMSE of conserved value, RMSE of Fourier-space in low, medium, high-frequency regimes.

\textbf{Visualisation Results.} Figure \ref{fig:fu1} shows the visualisation results on 1D CNS and 2D Da. FINO preserves sharp structures in CNS and produces smooth, consistent Darcy fields, closely matching the ground truth. Unlike global methods, which succeed in CNS but fail in Darcy flow, FINO performs well across both time-dependent and time-independent PDEs. Figure \ref{fig:sw} Spatiotemporal rollout comparison on a 2D time dependent PDE. Each column shows the solution field at different time steps (T = 0s to 1.0s), while rows correspond to the ground truth (GT), Local FNO, and our proposed FINO. FINO produces stable long-horizon predictions that remain visually and quantitatively consistent with the ground truth, demonstrating its improved accuracy and robustness in modeling PDE dynamics. Figure \ref{weather} compares the ground truth and predicted global surface pressure fields. FINO produces reconstructions that closely align with the reference, accurately capturing large-scale spatial variations and preserving fine regional structures without introducing spurious artifacts. The smoothness and consistency of the predicted fields highlight the model’s robustness in handling climate-scale PDE data, demonstrating its ability to generalize to complex, real-world geophysical patterns. More visualization can be found in Appendix \ref{numerical}.

\vspace{-5pt}

\subsection{Ablation Studies}
\textbf{Number of FINO Blocks.} Figure \ref{fig:abs} (a) shows how the accuracy increases as the composition of FINO blocks in the FINO increases for the 1D advection PDE. A single block yields the lowest, but as the model depth grows to four blocks, the error steadily drops to 0.00294. This indicates that additional depth enhances the model’s capacity to capture fine-grained local dynamics, while gains begin to saturate after two blocks.

\begin{wrapfigure}{l}{0.4\textwidth}
    \centering \vspace{-0.1cm}
    \includegraphics[width=1\linewidth]{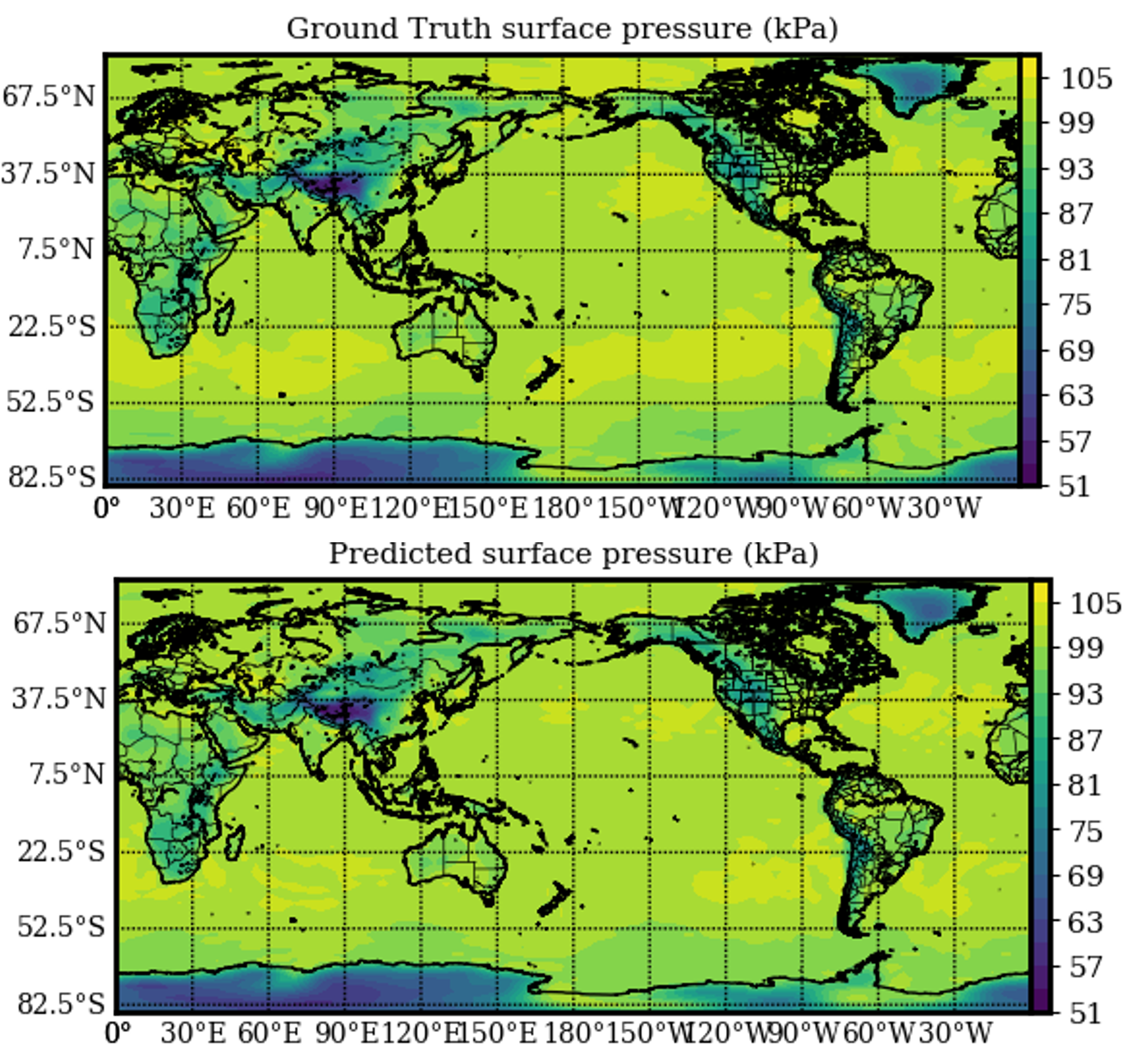}
    \caption{
    Ground truth (top) and predicted (bottom) global surface pressure. FINO accurately reconstructs large-scale patterns and regional variations, producing smooth, consistent fields that closely match the reference.}  \vspace{-0.1cm}
    \label{weather}
\end{wrapfigure}
\textbf{Data Size.} Figure \ref{fig:abs} (b) compares FINO and FNO among different dataset size (1k, 5k, 9k, 10k). As the dataset grows, both models reduce their RMSE. However, FINO consistently outperforms FNO across all sizes, and its advantage is particularly pronounced in low-data regimes (1k–5k samples), highlighting FINO’s stronger data efficiency and generalization under data scarcity.

\textbf{Training and Inference Time.} 
Figure~\ref{fig:abs}~(c) presents a comparison of training and inference times across all evaluated architectures. Transformer-based methods and U-FNO exhibit significantly higher computational cost, requiring prolonged training durations and slower inference speeds—often making them impractical for deployment or iterative scientific workflows.
In contrast, \textbf{FINO achieves the fastest inference time}, clocking in at just 4–5 seconds per evaluation, and maintains competitive training efficiency relative to FNO and LocalFNO. This remarkable speed advantage stems from FINO’s strictly local convolutional design and lightweight architecture, which avoids the overhead of global attention or spectral transforms.
These results highlight FINO’s suitability for real-world use cases where rapid model execution and retraining are critical—such as in-the-loop simulations, uncertainty quantification, or interactive PDE exploration. FINO offers an attractive balance between accuracy and computational efficiency

\section{Conclusion}

We introduced FINO, a neural operator framework inspired by classical finite-difference schemes, which leverages local learned stencils and explicit time integration to model PDE dynamics efficiently and interpretably. 
Our work reinforces a central premise: \textit{locality matters} in neural PDE solvers. By constraining the architecture to use compact, learnable spatial stencils and an explicit forward Euler scheme, FINO retains interpretability while achieving strong empirical and theoretical performance. Our theoretical analysis establishes connections between local approximation error and global rollout stability, offering provable guarantees for long-horizon predictions.
Empirically, FINO consistently outperforms competitive baselines—including global operators like FNO and Transformer-based models—across several datasets.  Notably, FINO achieves up to 44\% lower RMSE and 2× faster inference.
These results suggest that hybridising classical numerical insights with modern learning yields principled, efficient, and generalisable PDE surrogates. 
FINO offers a scalable blueprint for designing  neural operators grounded in locality, stability, and interpretability.

\subsubsection*{Acknowledgments}
CWC is supported by the Swiss National Science Foundation (SNSF) under grant number 20HW-1 220785.  
BD is supported in part by the New Cornerstone Investigator Program. 
CBS acknowledges support from the Philip Leverhulme Prize, the Royal Society Wolfson Fellowship, the EPSRC advanced career fellowship EP/V029428/1, EPSRC grants EP/S026045/1 and EP/T003553/1, EP/N014588/1, EP/T017961/1, the Wellcome Innovator Awards 215733/Z/19/Z and 221633/Z/20/Z, the European Union Horizon 2020 research and innovation
programme under the Marie Skodowska-Curie grant agreement No. 777826 NoMADS, the Cantab Capital Institute for the Mathematics of Information and the Alan Turing Institute. 
AIAR gratefully acknowledges the support of the Yau Mathematical Sciences Center, Tsinghua University. This work is also supported by the Tsinghua University Dushi Program.

\bibliographystyle{iclr2025_conference}
\bibliography{iclr2025_conference}

\newpage
\appendix

\appendix

\renewcommand{\theequation}{S.\arabic{equation}}  
\renewcommand{\thefigure}{S.\arabic{figure}}      
\renewcommand{\thetable}{S.\arabic{table}}        
\setcounter{equation}{0}  
\setcounter{figure}{0}    
\setcounter{table}{0}     

\section*{\LARGE \textsc{Appendix}}

In this appendix, we provide additional details regarding our methodology and a more comprehensive description of the dataset used in our experiments.

\section{Supplementary Information}

\subsection{Preliminaries: Finite Difference Method}
In this section, we provide a brief introduction to the finite difference method (FDM). Partial Differential Equations (PDEs) are inherently complex, requiring various advanced numerical methods. One classical approach is the finite difference method, which approximates partial derivatives by converting them into arithmetic operations (addition, subtraction, multiplication, and division) applied to discrete function values sampled on a computational grid. In numerical analysis for PDEs, a stencil is a structured set of points around a specific node used to approximate derivatives and other key quantities. Stencils underpin many numerical PDE methods, such as the five-point stencil for second-order spatial derivatives and the Crank–Nicolson stencil for time-dependent problems.

Finite difference methods approximate partial differential equations (PDEs)
by replacing derivatives with linear combinations of function values on a
discrete grid. For instance, a central difference approximation of the first
derivative in the \(x\)-direction at \((i,j)\) is given by: 
\begin{equation}
\left.\frac{\partial u}{\partial x}\right|_{i,j} 
\approx \frac{u_{i+1,j}-u_{i-1,j}}{2\,\Delta x},
\end{equation}
and the second derivative by:
\begin{equation}
\left.\frac{\partial^2 u}{\partial x^2}\right|_{i,j} 
\approx \frac{u_{i+1,j} - 2u_{i,j} + u_{i-1,j}}{\Delta x^2}.
\end{equation}
Higher accuracy requires larger stencils. For example, a fourth-order central
difference for the first derivative is: 
\begin{equation}
\left.\frac{\partial u}{\partial x}\right|_{i,j} 
\approx \frac{-u_{i+2,j} + 8u_{i+1,j} - 8u_{i-1,j} + u_{i-2,j}}{12\,\Delta x}
\end{equation}. 

In two dimensions, a common five-point stencil for the Laplacian
\(\nabla^2 u\) is: 
\begin{equation}
\nabla^2 u_{i,j} \approx 
\frac{u_{i+1,j} - 2u_{i,j} + u_{i-1,j}}{\Delta x^2}
+ \frac{u_{i,j+1} - 2u_{i,j} + u_{i,j-1}}{\Delta y^2}.
\end{equation}

Such stencils provide a flexible and systematic way to handle a variety of
PDEs by reducing derivative calculations to simple arithmetic on grid values.

\section{The Anatomy of FINO}
\label{prove}

In this section, we present the full statements of Proposition 1 and Theorem 1, as given in the main paper, together with their complete proofs. The proof of Proposition 1 relies on the following two lemmas.

\begin{lemma}[Composition Error Estimate]
Let $\Phi_{\Delta t}$ denote the exact evolution operator for time step $\Delta t$, such that 
\begin{equation}
u(t_{n+1},\cdot) = \Phi_{\Delta t}\bigl(u(t_n,\cdot)\bigr), 
\quad \text{where } t_n = n \Delta t.
\end{equation}
Then, for every integer $n \geq 0$, we have
\begin{equation}
u(t_n,\cdot) = \bigl(\Phi_{\Delta t}\bigr)^n \bigl(u(t_0,\cdot)\bigr),
\end{equation}
where $(\Phi_{\Delta t})^n$ denotes the $n$-fold composition of $\Phi_{\Delta t}$.
\label{compositionErr}
\end{lemma}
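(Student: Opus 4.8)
The plan is to prove the identity by induction on the step index $n$, since the statement is essentially an unwinding of the recursive one-step relation $u(t_{n+1},\cdot) = \Phi_{\Delta t}(u(t_n,\cdot))$ into an explicit iterated-composition form. No analytic machinery is required; the argument is purely structural.

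First I would dispatch the base case $n=0$. Adopting the standard convention that the zero-fold composition $(\Phi_{\Delta t})^0$ is the identity map on the state space $X$, we immediately get $(\Phi_{\Delta t})^0(u(t_0,\cdot)) = u(t_0,\cdot)$, so the claimed identity holds trivially at $n=0$.

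Next I would carry out the inductive step. Assuming $u(t_n,\cdot) = (\Phi_{\Delta t})^n(u(t_0,\cdot))$ holds for some fixed $n \ge 0$, I would apply $\Phi_{\Delta t}$ to the inductive hypothesis and invoke the defining one-step relation, obtaining
\begin{equation*}
u(t_{n+1},\cdot) = \Phi_{\Delta t}\bigl(u(t_n,\cdot)\bigr) = \Phi_{\Delta t}\bigl((\Phi_{\Delta t})^n(u(t_0,\cdot))\bigr) = (\Phi_{\Delta t})^{n+1}(u(t_0,\cdot)),
\end{equation*}
where the final equality is simply the definition of the $(n+1)$-fold composition. This closes the induction and yields the result for every integer $n \ge 0$.

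I do not expect any substantive obstacle: the lemma is a direct consequence of the definition of $\Phi_{\Delta t}$ together with the standard convention for iterated maps. The only points requiring minor care are fixing $(\Phi_{\Delta t})^0 = \mathrm{id}$ so the base case aligns with the recursion, and noting that $\Phi_{\Delta t}$ maps the state space into itself so that the composition is well-defined at every step. This identity is best viewed as a bookkeeping device: it is what will later allow Proposition \ref{Prop1} to compare the exact $K$-fold evolution $(\Phi_{\Delta t})^K$ against the learned surrogate's $K$-fold composition in a term-by-term telescoping argument.
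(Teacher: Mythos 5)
Your proof is correct and matches the paper's own argument essentially line for line: the same induction on $n$, the same base case via the convention $(\Phi_{\Delta t})^0 = \mathrm{id}$, and the same inductive step applying $\Phi_{\Delta t}$ to the hypothesis and invoking the definition of the $(n+1)$-fold composition. No discrepancies to report.
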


\begin{proof}
We proceed by induction on $n$. Let $P(n)$ denote the desired statement.  

\textbf{Base case ($n=0$).}  
By definition, $t_0 = 0$, and thus
\begin{equation}
(\Phi_{\Delta t})^0\bigl(u(t_0,\cdot)\bigr) = u(0,\cdot) = u(t_0,\cdot).
\end{equation}
Hence, $P(0)$ holds.  

\textbf{Inductive step.}  
Assume the claim holds for some $k \geq 0$, i.e.,
\begin{equation}
u(t_k,\cdot) = (\Phi_{\Delta t})^k\bigl(u(t_0,\cdot)\bigr).
\end{equation}
Then, for $k+1$, we have
\begin{equation}
u(t_{k+1},\cdot) 
= \Phi_{\Delta t}\bigl(u(t_k,\cdot)\bigr) 
= \Phi_{\Delta t}\Bigl((\Phi_{\Delta t})^k\bigl(u(t_0,\cdot)\bigr)\Bigr) 
= (\Phi_{\Delta t})^{k+1}\bigl(u(t_0,\cdot)\bigr).
\end{equation}
Thus, $P(k+1)$ also holds.  

By the principle of mathematical induction, the statement is true for all $n \geq 0$.
\end{proof}

\begin{lemma}
\label{lem2}
Suppose $e_0 = 0$ and the sequence satisfies the recurrence
\[
e_{n+1} \;\le\; C\,e_n + \varepsilon' \quad \text{for all } n \ge 0.
\]
Then, for every integer $k \ge 1$, we have
\[
e_k \;\le\; \bigl(1 + C + C^2 + \cdots + C^{k-1}\bigr)\,\varepsilon'
\;=\;\frac{C^k - 1}{C - 1}\,\varepsilon' \quad (\text{assuming } C \neq 1).
\]
\end{lemma}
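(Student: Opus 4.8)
The plan is to prove the bound by induction on $k$, exploiting the fact that a linear recurrence with constant coefficient unrolls into a geometric partial sum. Before starting the induction, I would flag the one hypothesis that makes the argument go through cleanly: since $C$ appears as a Lipschitz constant in the intended application, we have $C \ge 0$, and hence multiplying any valid upper bound on $e_n$ by $C$ preserves the direction of the inequality. This is the only place where the sign of $C$ enters, and it is exactly the step I expect to need care—if $C$ were allowed to be negative, scaling the inductive hypothesis by $C$ would flip the inequality and the whole scheme would break.

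For the base case $k = 1$, I would instantiate the recurrence at $n = 0$ and invoke the initial condition $e_0 = 0$, giving $e_1 \le C e_0 + \varepsilon' = \varepsilon'$. This matches the claimed bound, since the partial geometric sum $1 + C + \cdots + C^{k-1}$ reduces to the single term $1$ when $k = 1$.

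For the inductive step, I would assume $e_k \le (1 + C + \cdots + C^{k-1})\varepsilon'$, apply the recurrence once more at $n = k$, substitute the inductive hypothesis, and use $C \ge 0$ to obtain
\[
e_{k+1} \;\le\; C e_k + \varepsilon'
\;\le\; (C + C^2 + \cdots + C^{k})\varepsilon' + \varepsilon'
\;=\; (1 + C + \cdots + C^{k})\varepsilon',
\]
which is precisely the statement for index $k+1$, closing the induction.

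Finally, I would rewrite the partial geometric sum in closed form via $\sum_{j=0}^{k-1} C^j = \tfrac{C^k - 1}{C - 1}$ for $C \neq 1$ (and $\sum_{j=0}^{k-1} C^j = k$ in the degenerate case $C = 1$), recovering the two-case bound stated in the lemma and, after substitution into Lemma~\ref{compositionErr}, feeding directly into the proof of Proposition~\ref{Prop1}. The argument is otherwise entirely routine; the only genuine subtlety is the implicit non-negativity of $C$, which must be made explicit to justify preserving the inequality when scaling the inductive hypothesis.
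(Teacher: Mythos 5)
Your proposal is correct and follows essentially the same route as the paper's proof: induction on $k$ with base case $e_1 \le C e_0 + \varepsilon' = \varepsilon'$, one application of the recurrence plus the inductive hypothesis to extend the geometric partial sum, and the closed form $\sum_{j=0}^{k-1} C^j = \frac{C^k-1}{C-1}$ for $C \neq 1$. Your explicit observation that $C \ge 0$ is needed to preserve the inequality when scaling the inductive hypothesis is a small but worthwhile refinement the paper leaves implicit (it assumes $C > 0$ via the Lipschitz setting), not a different argument.
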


\begin{proof}
We proceed by induction on $k$.

\textbf{Base case} ($k=1$).  
From the recurrence, we have
\[
e_1 \;\le\; C\,e_0 + \varepsilon' \;=\; C\cdot 0 + \varepsilon' \;=\; \varepsilon'.
\]
This matches the claimed bound since $1 + C^{1-1} = 1$.

\textbf{Inductive step.}  
Assume the claim holds for some $k \ge 1$, i.e.,
\[
e_k \;\le\; (1 + C + C^2 + \cdots + C^{k-1})\,\varepsilon'.
\]
Then, using the recurrence,
\[
e_{k+1} \;\le\; C e_k + \varepsilon' 
\;\le\; C \bigl(1 + C + C^2 + \cdots + C^{k-1}\bigr)\,\varepsilon' + \varepsilon'.
\]
Factorizing $\varepsilon'$ gives
\[
e_{k+1} \;\le\; \bigl(C + C^2 + \cdots + C^k + 1\bigr)\,\varepsilon'
= (1 + C + C^2 + \cdots + C^k)\,\varepsilon'.
\]
Thus the claim also holds for $k+1$.

By induction, the bound holds for all integers $k \ge 1$.
\end{proof}

\begin{proposition} [Local-to-Global Error Bound]
Let $\Psi_\theta$ and $\Phi_{\Delta t}$ be two maps on a Banach space satisfying: 
\[
\bigl\|\Psi_{\Theta}(u) - \Phi_{\Lambda}(u)\bigr\| \le \varepsilon'
\quad\text{for all }u\text{ in the relevant norm and for some constants $\epsilon'\ge0$}.
\]
(This comes from the fact that we can approximate each local map
$\Phi_{\Delta t}$ by $\Psi_{\Theta}(u)$ to within $\epsilon'$  )
and Lipschitz, i.e. \[  \|\Phi_{\Delta t}(v)-\Phi_{\Delta t}(w)\|\le C\,\|v-w\|\quad\forall\,v,w,  \] for some constants $\epsilon'\ge0$ and $C>0$.  Define the iterates
\[
  u_{k+1} = \Psi_\theta(u_k), 
  \quad 
  \tilde u_{k+1} = \Phi_{\Delta t}(\tilde u_k),
  \quad
  u_0 = \tilde u_0.
\]
Then after $K$ steps,
\[
  \|( \Psi_\theta)^K(u_0) \;-\; (\Phi_{\Delta t})^K(u_0)\|
  \;\le\;\frac{C^K - 1}{\,C-1\,}\,\epsilon'\quad
  (\text{for }C\neq1),
\], where $(\Psi_{\theta})^K(u_0)$ means the $K$-fold composition of the map $\Psi_{\Theta}$ starting from $u_0$.
and if $C=1$ the right‐hand side is simply $K\,\epsilon'$.
\label{prop}
\end{proposition}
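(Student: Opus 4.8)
The plan is to reduce the $K$-step global error to a one-term linear recurrence that Lemma~\ref{lem2} already resolves. First I would introduce the accumulated per-step error $e_k := \bigl\|(\Psi_\theta)^k(u_0) - (\Phi_{\Delta t})^k(u_0)\bigr\| = \|u_k - \tilde u_k\|$, and record the initial condition $e_0 = 0$, which holds because $u_0 = \tilde u_0$ by hypothesis. The entire argument then amounts to establishing $e_{k+1} \le C\,e_k + \varepsilon'$ and feeding this into the lemma.

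The key step is to derive that recurrence. I would write $e_{k+1} = \|\Psi_\theta(u_k) - \Phi_{\Delta t}(\tilde u_k)\|$ and insert the intermediate point $\Phi_{\Delta t}(u_k)$, so that the triangle inequality gives $e_{k+1} \le \|\Psi_\theta(u_k) - \Phi_{\Delta t}(u_k)\| + \|\Phi_{\Delta t}(u_k) - \Phi_{\Delta t}(\tilde u_k)\|$. The first summand is at most $\varepsilon'$ by the uniform one-step approximation hypothesis, evaluated at the particular point $u_k$ (legitimate precisely because that bound is assumed for every $u$). The second summand is at most $C\,\|u_k - \tilde u_k\| = C\,e_k$ by the Lipschitz property of $\Phi_{\Delta t}$. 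Combining the two yields exactly $e_{k+1} \le C\,e_k + \varepsilon'$.

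With the recurrence and $e_0 = 0$ in hand, I would invoke Lemma~\ref{lem2} verbatim to conclude $e_K \le \bigl(1 + C + \cdots + C^{K-1}\bigr)\varepsilon' = \tfrac{C^K - 1}{C-1}\,\varepsilon'$ when $C \neq 1$, and to obtain the degenerate geometric sum $e_K \le K\,\varepsilon'$ when $C = 1$. Since $e_K$ is by definition the left-hand side of the claimed bound, this finishes the proof. Lemma~\ref{compositionErr} is not strictly needed here but confirms that the iterates $(\Phi_{\Delta t})^k(u_0)$ indeed coincide with the true PDE trajectory, which is what gives the estimate its meaning.

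This proposition carries no substantial obstacle; it is a standard telescoping (discrete Gr\"onwall) argument. The one point that genuinely demands care is the \emph{choice of insertion point}: one must interpolate through $\Phi_{\Delta t}(u_k)$ rather than through $\Psi_\theta(\tilde u_k)$, so that the Lipschitz estimate is applied to $\Phi_{\Delta t}$---the only map assumed Lipschitz---while the uniform one-step bound absorbs the surrogate mismatch. Inserting the alternative intermediate point would instead require a Lipschitz constant for $\Psi_\theta$, which is not among the hypotheses, so that route does not close.
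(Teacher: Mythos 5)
Your proposal is correct and follows essentially the same route as the paper's proof: the same insertion of the intermediate point $\Phi_{\Delta t}(u_k)$, the same triangle-inequality split into the $\varepsilon'$ approximation term and the $C\,e_k$ Lipschitz term, and the same appeal to Lemma~\ref{lem2} to resolve the recurrence $e_{k+1} \le C\,e_k + \varepsilon'$ with $e_0 = 0$. Your closing remark about why the interpolation must go through $\Phi_{\Delta t}(u_k)$ rather than $\Psi_\theta(\tilde u_k)$ is a correct and worthwhile observation that the paper leaves implicit.
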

\begin{proof}
By definition: 
$\|u_{k+1}-\tilde u_{k+1}\| = \bigl\|\Psi_\theta(u_k)-\Phi_{\Delta t}(\tilde u_k)\bigr\|$. We first rewrite 
\begin{equation}
\label{tri}
  \Psi_\theta(u_k) - \Phi_{\Delta t}(\tilde u_k)
  \;=\;
  \bigl(\Psi_\theta(u_k) - \Phi_{\Delta t}(u_k)\bigr)
  \;+\;
  \bigl(\Phi_{\Delta t}(u_k) - \Phi_{\Delta t}(\tilde u_k)\bigr),
\end{equation}

By Triangle inequality and equation \ref{tri} : We get  
\begin{equation}
\label{norm}
\|u_{k+1}-\tilde u_{k+1}\| \le \bigl\|\Psi_\theta(u_k)-\Phi_{\Delta t}(u_k)\bigr\| + \bigl\|\Phi_{\Delta t}(u_k)-\Phi_{\Delta t}(\tilde u_k)\bigr\|. 
\end{equation}
By the assumption that $\|\Psi_\theta(u)-\Phi_{\Delta t}(u)\|\le\epsilon'$ and $\|\Phi_{\Delta t}(v)-\Phi_{\Delta t}(w)\|\le C\,\|v-w\|$.  Equation \ref{norm} can be further simplified as 
$\|u_{k+1} - \tilde{u}_{k+1}\| \le \epsilon' + C\,\|u_k - \tilde{u}_k\|$. Let $e_k = \|u_k - \tilde u_k\|$. Then

\[e_{k+1} \le C\,e_k + \epsilon'\]. By Lemma \ref{lem2}, we get $e_k \;\le\; \bigl(1 + C + C^2 + \cdots + C^{k-1}\bigr)\,\varepsilon'
\;=\;\frac{C^k - 1}{C - 1}\,\varepsilon' \quad(\text{assuming }C \neq 1)$. 
After $K$ steps, we get
\[
\bigl\|(\Psi_{\Theta})^K(u_0)\;-\;(\Phi_{\Lambda})^K(u_0)\bigr\|
=\;e_K\;\le\;\frac{C^K - 1}{C - 1}\,\epsilon'.
\]

\end{proof}

\begin{theorem}[Universal Approximation of FINO for Discrete Time--Stepped PDE Dynamics]
\label{thm:fdnet-universal}
Let $G \subset \mathbb{R}^d$ be a finite spatial grid with $m$ nodes, and let 
\[
X := \mathbb{R}^{m \times c}, \qquad 
\|u\| := \Biggl(\frac{1}{mc}\sum_{i=1}^m \sum_{\ell=1}^c |u_{i,\ell}|^2\Biggr)^{1/2},
\]
denote the Banach space of grid--based states.  
Let $\Phi_{\Delta t}: X \to X$ be the exact one--step evolution operator of a semi--discrete PDE, assumed to be Lipschitz stable:
\begin{equation}
\label{eq:lipschitz}
\|\Phi_{\Delta t}(v) - \Phi_{\Delta t}(w)\| \;\le\; C \,\|v-w\|, 
\qquad \forall v,w \in X,
\end{equation}
for some constant $C \ge 1$.  

For a final time $T = K \Delta t$ with integer $K \ge 1$, define the exact solution after $K$ steps as
\[
u(T) = \bigl(\Phi_{\Delta t}\bigr)^K(u_0), 
\]
where $(\Phi_{\Delta t})^K$ denotes the $K$--fold composition.  

Then for every compact set $\mathcal{U} \subset X$ and every tolerance $\varepsilon>0$, there exists a depth--$K$ FD--NET
\[
\Psi_\theta^{(K)} \;:=\; \underbrace{\Psi_\theta \circ \cdots \circ \Psi_\theta}_{K \text{ identical blocks}},
\]
such that
\[
\sup_{u_0 \in \mathcal{U}} 
\;\bigl\|\,\Psi_\theta^{(K)}(u_0) - \bigl(\Phi_{\Delta t}\bigr)^K(u_0)\bigr\|
\;\le\; \varepsilon.
\]
In other words, the class of FINO is dense in the set of discrete solution operators of Lipschitz--stable PDEs on compact subsets of $X$.
\end{theorem}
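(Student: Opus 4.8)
The plan is to reduce the multi-step claim to a single-step approximation problem and then invoke the error-propagation estimate of Proposition~\ref{prop}. Because $\Phi_{\Delta t}$ is autonomous (the same operator is applied at every step), one FINO block $\Psi_\theta$ suffices and is reused $K$ times, matching the ``$K$ identical blocks'' in the statement. First I would produce a single block $\Psi_\theta$ whose action matches $\Phi_{\Delta t}$ to within a small tolerance $\varepsilon'$ on a suitable compact set, and then argue that the $K$-fold composition inherits the global bound $\tfrac{C^K-1}{C-1}\varepsilon'$ from Proposition~\ref{prop}. Choosing $\varepsilon'$ so that this bound is at most $\varepsilon$ closes the argument.

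Next I would record that $X=\mathbb{R}^{m\times c}$ is finite-dimensional, so $\Phi_{\Delta t}$ is a continuous (indeed Lipschitz) map between finite-dimensional normed spaces; on any compact set it is exactly the kind of target addressed by the universal approximation theorem. Since a single FINO block is a composition of convolutions (linear maps on the fixed grid $G$), pointwise ReLU and sigmoid nonlinearities, and residual/time-stepping additions, with sufficiently many channels and stacked sub-blocks it realises a network that is dense in $C(\mathcal{K};X)$ for any compact $\mathcal{K}\subset X$. I would therefore assert the existence of parameters $\theta$ with $\sup_{u\in\mathcal{K}}\|\Psi_\theta(u)-\Phi_{\Delta t}(u)\|\le\varepsilon'$, for a compact $\mathcal{K}$ fixed in the next step.

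The technical heart is the choice of that compact set, because Proposition~\ref{prop} demands a \emph{uniform} one-step bound whereas universal approximation only delivers one on a bounded region, and the surrogate iterates $u_k=(\Psi_\theta)^k(u_0)$ may drift off $\mathcal{U}$. I would build $\mathcal{K}_0:=\bigcup_{j=0}^{K}(\Phi_{\Delta t})^j(\mathcal{U})$, a finite union of continuous images of the compact set $\mathcal{U}$, hence compact and containing every true iterate $\tilde u_k$, and then enlarge it to the closed unit neighbourhood $\mathcal{K}:=\{u:\operatorname{dist}(u,\mathcal{K}_0)\le 1\}$, still compact. Fixing $\Psi_\theta$ by universal approximation on $\mathcal{K}$, a short induction mirroring the recurrence $e_{k+1}\le Ce_k+\varepsilon'$ used in Proposition~\ref{prop} (and solved in Lemma~\ref{lem2}) then shows that if $\varepsilon'$ is small enough that $\tfrac{C^K-1}{C-1}\varepsilon'\le 1$, every surrogate iterate $u_k$ stays within distance $1$ of $\tilde u_k\in\mathcal{K}_0$, hence inside $\mathcal{K}$, so the one-step bound is genuinely valid at each visited point.

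Finally, with the one-step approximation now legitimate along the whole trajectory, Proposition~\ref{prop} yields $\|(\Psi_\theta)^K(u_0)-(\Phi_{\Delta t})^K(u_0)\|\le\tfrac{C^K-1}{C-1}\varepsilon'$ uniformly in $u_0\in\mathcal{U}$; imposing the additional constraint $\tfrac{C^K-1}{C-1}\varepsilon'\le\varepsilon$ (or $K\varepsilon'\le\varepsilon$ in the borderline case $C=1$) gives the claim. I expect the main obstacle to be precisely this containment step: certifying that the learned surrogate does not escape the region where its one-step accuracy is guaranteed, which forces the self-consistent coupling of neighbourhood radius and tolerance above, together with the nontrivial matter of justifying that the specific convolutional FINO block—rather than a generic dense network—enjoys the universal approximation property on $\mathcal{K}$.
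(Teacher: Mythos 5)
Your proposal follows the same skeleton as the paper's proof: reduce to a single one-step block via the composition identity (Lemma~\ref{compositionErr}), invoke the classical universal approximation theorem on a compact reachable set to obtain a one-step tolerance $\varepsilon'$, propagate the error through the recursion $e_{k+1}\le C\,e_k+\varepsilon'$ (Lemma~\ref{lem2}, Proposition~\ref{prop}), and finally tune $\varepsilon'=\varepsilon\,(C-1)/(C^K-1)$ (or $\varepsilon/K$ when $C=1$). The one place where you genuinely diverge is the step you yourself call the technical heart, and there your version is strictly more careful than the paper's. The paper defines $\mathcal{V}=\bigcup_{j=0}^{K-1}(\Phi_{\Delta t})^j(\mathcal{U})$ --- the set of \emph{true} iterates only --- establishes $\sup_{u\in\mathcal{V}}\|\Psi_\theta(u)-\Phi_{\Delta t}(u)\|\le\varepsilon'$ there, and then in its Step 4 applies this bound at the surrogate iterates $u_k=(\Psi_\theta)^k(u_0)$, which in general need not lie in $\mathcal{V}$: nothing in the paper's argument prevents the learned trajectory from drifting outside the set on which its one-step accuracy was certified. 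Your fix --- enlarging to the closed unit neighbourhood $\mathcal{K}$ of $\mathcal{K}_0=\bigcup_{j=0}^{K}(\Phi_{\Delta t})^j(\mathcal{U})$ (still compact since $X$ is finite-dimensional), then running a self-consistent induction showing $e_k\le\frac{C^k-1}{C-1}\,\varepsilon'\le 1$ so that $u_k$ stays within distance $1$ of $\tilde u_k\in\mathcal{K}_0$ and hence inside $\mathcal{K}$ --- closes exactly this hole, at the modest price of folding the extra constraint $\frac{C^K-1}{C-1}\,\varepsilon'\le 1$ into the choice of $\varepsilon'$. So your route buys rigor where the paper's proof has a genuine (if repairable) gap. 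Both arguments share the remaining soft spot, which you at least flag explicitly: the claim that the specific convolutional FINO block, rather than a generic feed-forward network, is dense in $C(\mathcal{K};X)$; the paper waves this through by ``increasing width/depth of the block,'' and a fully rigorous treatment would need to argue it from the block's actual architecture.
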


\begin{proof}
\textbf{Step 1 (Semigroup expansion).}
By Lemma~\ref{compositionErr}, the exact solution at time $t_n=n\Delta t$ is
\begin{equation}
  u(t_n,\cdot) \;=\; \bigl(\Phi_{\Delta t}\bigr)^n\bigl(u(t_0,\cdot)\bigr).
\end{equation}
In particular, the target operator at time $T=K\Delta t$ is $\Phi_{\Delta t}^{\,K}$.
Therefore, it suffices to approximate the one-step map $\Phi_{\Delta t}$ and then compose $K$ times.

\smallskip
\noindent
\textbf{Step 2 (Compactness of the reachable set).}
Define the compact ``reachable'' set
\[
  \mathcal V\;:=\;\bigcup_{j=0}^{K-1} \bigl(\Phi_{\Delta t}\bigr)^j(\mathcal U).
\]
Continuity of $\Phi_{\Delta t}$ implies by induction that each image $(\Phi_{\Delta t})^j(\mathcal U)$ is compact, hence $\mathcal V$ is compact.

\smallskip
\noindent
\textbf{Step 3 (One-step uniform approximation by an FINO block).}
Since $X$ is finite-dimensional and $\Phi_{\Delta t}:\mathcal V\to X$ is continuous on the compact set $\mathcal V$, the classical Universal Approximation Theorem for feed-forward networks (e.g., Cybenko 1989; Hornik, Stinchcombe \& White 1989) ensures that for any $\varepsilon'>0$ there exists a (finite) neural network $F_\theta:\mathcal V\to X$ with
\begin{equation}\label{eq:one-step-ua}
  \sup_{u\in\mathcal V}\;\bigl\|F_\theta(u)-\Phi_{\Delta t}(u)\bigr\|\;\le\;\varepsilon'.
\end{equation}
An FD--NET one-step block $\Psi_\theta$ (a residual Euler update $u\mapsto u+\Delta t\,\mathcal N_\theta(u)$ with a finite convolutional/ReLU stack $\mathcal N_\theta$) is a special case of such a feed-forward map from $X$ to $X$. Hence, by increasing width/depth of the block, we can realize \eqref{eq:one-step-ua} with $\Psi_\theta$ in place of $F_\theta$:
\begin{equation}\label{eq:fdnet-one-step}
  \sup_{u\in\mathcal V}\;\bigl\|\Psi_\theta(u)-\Phi_{\Delta t}(u)\bigr\|\;\le\;\varepsilon'.
\end{equation}

\smallskip
\noindent
\textbf{Step 4 (Error recursion).}
Let $u_0=\tilde u_0\in\mathcal U$ and define the two sequences
\[
  u_{k+1}=\Psi_\theta(u_k),
  \qquad
  \tilde u_{k+1}=\Phi_{\Delta t}(\tilde u_k),
  \qquad k=0,1,\dots,K-1.
\]
Set $e_k:=\|u_k-\tilde u_k\|$.
Adding and subtracting $\Phi_{\Delta t}(u_k)$ and using \eqref{eq:lipschitz} and \eqref{eq:fdnet-one-step}, we obtain
\[
  e_{k+1}
  \;=\;\bigl\|\Psi_\theta(u_k)-\Phi_{\Delta t}(\tilde u_k)\bigr\|
  \;\le\;\underbrace{\bigl\|\Psi_\theta(u_k)-\Phi_{\Delta t}(u_k)\bigr\|}_{\le\,\varepsilon'}
        +\underbrace{\bigl\|\Phi_{\Delta t}(u_k)-\Phi_{\Delta t}(\tilde u_k)\bigr\|}_{\le\,C\,e_k}
  \;\le\; C\,e_k+\varepsilon',
\]
with $e_0=0$.

\smallskip
\noindent
\textbf{Step 5 (Geometric accumulation).}
By Lemma~\ref{lem2} (Geometric Error Lemma) and Proposition ~\ref{prop} , the recursion $e_{k+1}\le C e_k+\varepsilon'$ with $e_0=0$ yields
\[
  e_k\;\le\;
  \begin{cases}
    \dfrac{C^k-1}{C-1}\,\varepsilon', & C\neq 1,\\[6pt]
    k\,\varepsilon', & C=1,
  \end{cases}
  \qquad \text{for all }k=1,\dots,K.
\]
In particular,
\begin{equation}\label{eq:final-bound}
  \bigl\|\Psi_\theta^{(K)}(u_0)-\Phi_{\Delta t}^{\,K}(u_0)\bigr\|
  \;=\; e_K
  \;\le\;
  \begin{cases}
    \dfrac{C^K-1}{C-1}\,\varepsilon', & C\neq 1,\\[6pt]
    K\,\varepsilon', & C=1.
  \end{cases}
\end{equation}

\smallskip
\noindent
\textbf{Step 6 (Choice of the local tolerance).}
Given any $\varepsilon>0$, choose
\[
  \varepsilon'\;:=\;
  \begin{cases}
    \varepsilon\,\dfrac{C-1}{C^K-1}, & C\neq 1,\\[8pt]
    \varepsilon/K, & C=1.
  \end{cases}
\]
By \eqref{eq:final-bound}, this guarantees $e_K\le \varepsilon$ uniformly for all $u_0\in\mathcal U$.
Since \eqref{eq:fdnet-one-step} can be enforced by increasing the size of the one-step FINO block, the theorem follows.
\end{proof}

\section{Experiment Results}
\subsection{Dataset and Implementation Detail}
\label{data}

\textcolor{deblue}{\FiveStar} \textbf{1D Advection}
\\ \textbf{Governing PDE.}

\begin{equation}
\partial_t u(t,x) + \beta\,\partial_x u(t,x) = 0,
\qquad (t,x)\in(0,2]\times(0,1).
\label{eq:adv-model}
\end{equation}

\noindent\textbf{Initial data.}
\begin{equation}
u(0,x)=u_0(x), \qquad x\in(0,1).
\label{eq:adv-ic}
\end{equation}

\noindent\textbf{Parameters.}
$\beta\in\mathbb{R}$ is the constant transport (advection) speed and we choose $\beta =4$.

\noindent\textbf{Closed-form solution.}
By the method of characteristics, the solution is a rigid shift of the initial profile:
\begin{equation}
u(t,x)=u_0\!\left(x-\beta t\right).
\label{eq:adv-exact}
\end{equation}

\noindent\textbf{Discretization and split.}
\begin{itemize}
\item Spatial grid: Offical $1024\times1024$ and a downsample factor by 8
\item Temporal samples: $200$ snapshots; first $10$ used as inputs, remaining $190$ as prediction targets and downsample a factor by 5.
\item Dataset split: $9000$ training / $1000$ testing samples.
\end{itemize}

\textcolor{deblue}{\FiveStar} \textbf{1D Diffusion–Reaction}
\\ \textbf{Governing PDE.}
\begin{equation}
\partial_t u(t,x) \;=\; \nu\,\partial_{xx} u(t,x) \;+\; \rho\,u(t,x)\!\left(1 - u(t,x)\right),
\qquad x\in(0,1),~ t\in(0,1],
\label{eq:dr-pde}
\end{equation}
with initial data
\begin{equation}
u(0,x) \;=\; u_0(x), \qquad x\in(0,1),
\label{eq:dr-ic}
\end{equation}
and periodic boundary conditions on $[0,1]$:
\begin{equation}
u(t,0)=u(t,1), \qquad \partial_x u(t,0)=\partial_x u(t,1), \qquad t\in(0,1].
\label{eq:dr-bc}
\end{equation}

\paragraph{Dynamics.} 
The reaction term $\rho\,u(1-u)$ can drive near–exponential transients, producing fast time–scale phenomena that stress both numerical solvers and learning surrogates.

\paragraph{Initialization.}
To avoid ill-posed or degenerate starts, the prescribed profile is rectified and normalized:
\[
u_0(x) \;\leftarrow\; \frac{|u_0(x)|}{\max_{x\in(0,1)} |u_0(x)|}\,,
\]
so that $u_0\in[0,1]$ and $\|u_0\|_\infty=1$.

\noindent\textbf{Discretization and split.}
\begin{itemize}
\item Spatial grid: Offical $1024\times1024$ and a downsample factor by 8
\item Temporal samples: $200$ snapshots; first $10$ used as inputs, remaining $190$ as prediction targets and downsample a factor by 5.
\item Dataset split: $9000$ training / $1000$ testing samples.
\end{itemize}

\textcolor{deblue}{\FiveStar} \textbf{1D CNS}
\\The equations governing compressible fluid dynamics describe the evolution of density, momentum, and energy of a fluid system. They are written as

\begin{align}
\partial_t \rho + \nabla \cdot (\rho \mathbf{v}) &= 0,  \\
\rho \bigl( \partial_t \mathbf{v} + \mathbf{v} \cdot \nabla \mathbf{v} \bigr)
&= - \nabla p + \eta \Delta \mathbf{v} + \left(\zeta + \tfrac{\eta}{3}\right)\nabla (\nabla \cdot \mathbf{v}),  \\
\partial_t \!\left( \epsilon + \tfrac{\rho \|\mathbf{v}\|^2}{2} \right)
+ \nabla \cdot \!\left[ \Bigl(\epsilon + p + \tfrac{\rho \|\mathbf{v}\|^2}{2}\Bigr)\mathbf{v} - \mathbf{v}\cdot \boldsymbol{\sigma}' \right] &= 0, 
\end{align}
For the details of the notation and description, we refer to PDEBench \citep{takamoto2022pdebench}

\noindent\textbf{Discretization and split.}
\begin{itemize}
\item Spatial grid: Offical $1024\times1024$ and a downsample factor by 8
\item Temporal samples: $100$ snapshots; first $10$ used as inputs, remaining $90$ as prediction targets and downsample a factor by 5.
\item Dataset split: $9000$ training / $1000$ testing samples.
\end{itemize}
\textcolor{deblue}{\FiveStar} \textbf{2D Darcy Flow}
\\ \textbf{Governing PDE.}
On the unit square $\Omega=(0,1)^2$, the steady 2D Darcy flow is
\begin{equation}
-\nabla\!\cdot\!\big(a(x,y)\,\nabla u(x,y)\big)=f(x,y), \quad (x,y)\in\Omega,
\label{eq:darcy}
\end{equation}
with homogeneous Dirichlet boundary condition
\begin{equation}
u(x,y)=0,\qquad (x,y)\in\partial\Omega .
\label{eq:bc}
\end{equation}
Here $a(x,y)$ denotes the diffusion coefficient and $u(x,y)$ the solution field.

\textbf{Operator-learning objective.}
We aim to learn the solution operator
\begin{equation}
\mathcal{S}:\; a \mapsto u, \qquad (x,y)\in\Omega ,
\label{eq:operator}
\end{equation}
so that, given $a$, the predictor returns the corresponding solution $u$ of
\eqref{eq:darcy}--\eqref{eq:bc}.

\textbf{Data protocol.}
Following the PDEBench setup~\cite{takamoto2022pdebench}:
\begin{itemize}
  \item \emph{Forcing.} A spatially uniform load $f(x,y)\equiv\beta$ with
  $\beta = 1.0$.
  \item \emph{Splits.} $9000$ training samples and $1000$ test samples.
  \item \emph{Resolution.} Fields are provided on the official grid $128\times 128$  and downsampled by a factor of two.
\end{itemize}

\textcolor{deblue}{\FiveStar} \textbf{2D Shallow--Water}
\\\textbf{Governing PDEs.}
\begin{align}
\partial_t h + \partial_x (h u) + \partial_y (h v) &= 0, \label{eq:sw2d-continuity}\\
\partial_t (h u) + \partial_x\!\left(u^{2}h + \tfrac{1}{2} g_r h^{2}\right) &= -\, g_r\, h\, \partial_x b, \label{eq:sw2d-momx}\\
\partial_t (h v) + \partial_y\!\left(v^{2}h + \tfrac{1}{2} g_r h^{2}\right) &= -\, g_r\, h\, \partial_y b. \label{eq:sw2d-momy}
\end{align}

\noindent\textbf{State and coefficients.}
\begin{itemize}
\item $h(x,y,t)$: water depth.
\item $(u(x,y,t),\, v(x,y,t))$: depth-averaged velocities in the $x$- and $y$-directions.
\item $b(x,y)$: bathymetry (spatially varying bed elevation).
\item $g_r$: gravitational acceleration.
\end{itemize}

\noindent\textbf{Domain and time horizon.}
\begin{itemize}
\item Spatial domain $\Omega=[-2.5,\,2.5]^2$.
\item Time interval $t\in[0,1]\,\mathrm{s}$.
\end{itemize}

\noindent\textbf{Initial condition (radial dam--break).}
\[
h(0,x,y)=
\begin{cases}
2.0, & \sqrt{x^{2}+y^{2}}<r,\\[2pt]
1.0, & \sqrt{x^{2}+y^{2}}\ge r,
\end{cases}
\qquad r \sim \mathcal{U}(0.3,\,0.7).
\]

\noindent\textbf{Learning objective (solution operator).}
\[
\mathcal{S}:\; h\!\big|_{t\in[0,t']} \;\longmapsto\; h\!\big|_{t\in[t',T]}, 
\qquad (x,y)\in\Omega,
\]
with $t'=0.009\,\mathrm{s}$ and $T=1.000\,\mathrm{s}$.

\noindent\textbf{Discretization and split.}
\begin{itemize}
\item Spatial grid: Offical $128\times128$ and downsample factor by 2.
\item Temporal samples: $101$ snapshots; first $10$ used as inputs, remaining $91$ as prediction targets.
\item Dataset split: $900$ training / $100$ testing samples.
\end{itemize}

\textcolor{deblue}{\FiveStar} \textbf{2D Diffusion--Reaction}

\textbf{State variables.} $u=u(x,y,t)$ (activator), $v=v(x,y,t)$ (inhibitor).

\textbf{Governing PDEs.}
\begin{subequations}\label{eq:dr}
\begin{align}
\partial_t u &= D_u\,(\partial_{xx}+\partial_{yy})\,u + R_u(u,v), \label{eq:dr_u}\\
\partial_t v &= D_v\,(\partial_{xx}+\partial_{yy})\,v + R_v(u,v). \label{eq:dr_v}
\end{align}
\end{subequations}

\textbf{Reaction terms.}
\begin{subequations}\label{eq:reaction}
\begin{align}
R_u(u,v) &= u - u^{3} - k - v, \label{eq:ru}\\
R_v(u,v) &= u - v. \label{eq:rv}
\end{align}
\end{subequations}

\textbf{Parameters and domain.}
\begin{itemize}
\item Diffusion coefficients: $D_u=1\times 10^{-3}$,\quad $D_v=5\times 10^{-3}$.
\item Coupling constant: $k=5\times 10^{-3}$.
\item Space--time: $\Omega=[-1,1]^2$,\quad $t\in[0,5]$.
\end{itemize}

\textbf{Operator learning target.}
\begin{equation}\label{eq:operator}
\mathcal{S}:\{u,v\}_{t\in[0,t']} \longmapsto \{u,v\}_{t\in(t',T]},\quad (x,y)\in\Omega,
\end{equation}
with $t'=0.045\,\mathrm{s}$ and $T=5.000\,\mathrm{s}$.

\textbf{Discretization and splits.}
\begin{itemize}
\item Spatial grid: Official grid $128\times128$ and a downsample factor by 2.
\item Temporal resolution: $101$ frames (inputs: $10$; prediction horizon: $91$).
\item Dataset size: $900$ training and $100$ testing trajectories, following the PDEBench protocol~\citep{takamoto2022pdebench}.
\end{itemize}

\textcolor{deblue}{\FiveStar} \textbf{2D Climate Modelling}
~\citep{kissas2022learning} use daily global fields from the NOAA PSL NCEP/NCAR Reanalysis to construct a paired dataset of near-surface air temperature (input) and surface pressure (target). Samples span ten calendar years split into two five-year blocks (2000--2004 train; 2005--2009 test), with leap days removed, on a co-registered $72\times72$ latitude--longitude grid covering $[-90^\circ,90^\circ]\times[0^\circ,360^\circ)$. The task is to learn a black-box operator
\[
\mathcal{G}: C(X,\mathbb{R}) \to C(X,\mathbb{R})
\]
mapping temperature to pressure for each day. We provide recommended preprocessing (regridding, normalization, area weighting), evaluation metrics, and caveats regarding physical ill-posedness and topographic effects.

\subsection{Additional numerical results and Visulisations}
\label{numerical}

\begin{table*}[t!]
\caption{
Comparison of baselines and FINO on 1D PDE benchmarks—advection, compressible Navier–Stokes (CNS), and diffusion–reaction — measured by normalised RMSE (nRMSE) , max error and  conservation RMSE (cRMSE). The best results are highlighted in \colorbox[HTML]{BBFFBB}{green}.}
\centering
\resizebox{1.0\textwidth}{!}{%
\begin{tabular}{c |
>{\columncolor[HTML]{FFFFFF}}c 
>{\columncolor[HTML]{FFFFFF}}c 
>{\columncolor[HTML]{FFFFFF}}c |
>{\columncolor[HTML]{FFFFFF}}c 
>{\columncolor[HTML]{FFFFFF}}c 
>{\columncolor[HTML]{FFFFFF}}c |
>{\columncolor[HTML]{FFFFFF}}c 
>{\columncolor[HTML]{FFFFFF}}c 
>{\columncolor[HTML]{FFFFFF}}c }
\hline
\cellcolor[HTML]{EFEFEF}\textsc{Method} & \multicolumn{3}{c|}{\cellcolor[HTML]{EFEFEF}advection (1D) } & \multicolumn{3}{c|}{\cellcolor[HTML]{EFEFEF}CNS (1D)} & \multicolumn{3}{c}{\cellcolor[HTML]{EFEFEF}diffusion-reaction (1D)} \\ \cline{2-10} 
 & nRMSE$\downarrow$ & max error$\downarrow$ & cRMSE$\downarrow$ & nRMSE$\downarrow$ & max error$\downarrow$ & cRMSE$\downarrow$ & nRMSE$\downarrow$ & max error$\downarrow$ & cRMSE$\downarrow$ \\ \hline
UNet & 0.09246  & 0.50268  & 0.03269  & 0.8123&	50.83448&	15.34809 & 0.07399	&0.21527	&0.05359 \\
FNO & \textbf{0.00892}&	\textbf{0.11371}&	\textbf{0.00032} & 0.23532	&4.78869	&0.0879 & 0.05484&	0.0521&	0.03036 \\
UFNO & 0.0175	&0.26239	&0.00129 & 0.37418&	5.42717&	0.17281 & 0.01705&	\cellcolor[HTML]{D9FFD9}0.02608	&0.01026 \\ 
FFNO & 0.01198	&0.17062	&0.00083 & \textbf{0.16448}&	\textbf{4.55828}&	\textbf{0.06288} & 0.02685&	\textbf{0.0304}&	0.01511 \\
Trasnsovler & 0.01555	&0.17913	&0.00083 & 0.35738&	8.22922&	0.22427 & \textbf{0.01347} &	0.03871	& \textbf{0.00963} \\ \hline
FINO & \cellcolor[HTML]{D9FFD9}0.0049 & \cellcolor[HTML]{D9FFD9}0.09395 & \cellcolor[HTML]{D9FFD9}0.00036 & \cellcolor[HTML]{D9FFD9}0.14635	&\cellcolor[HTML]{D9FFD9}3.53841	&\cellcolor[HTML]{D9FFD9}0.0654 & \cellcolor[HTML]{D9FFD9}0.01137&	0.04157&	\cellcolor[HTML]{D9FFD9}0.00729 \\
Improvement& 45.07 \% & 17.38 \% & 12.5 \% & 11.02\% & 22.37\% & 4.01\% & 15.60 & -59.4 & 24.30 \\ \hline

\end{tabular}}
\label{1Dextra1}
\end{table*}

\begin{table*}[t]
\caption{
Comparison of baselines and FINO on 2D PDE benchmarks—advection, compressible Navier–Stokes (CNS), and diffusion–reaction — measured by normalised RMSE (nRMSE) , max error and  conservation RMSE (cRMSE). The best results are highlighted in \colorbox[HTML]{BBFFBB}{green} .
}
\centering
\resizebox{1.0\textwidth}{!}{%
\begin{tabular}{c |
>{\columncolor[HTML]{FFFFFF}}c 
>{\columncolor[HTML]{FFFFFF}}c 
>{\columncolor[HTML]{FFFFFF}}c |
>{\columncolor[HTML]{FFFFFF}}c 
>{\columncolor[HTML]{FFFFFF}}c 
>{\columncolor[HTML]{FFFFFF}}c |
>{\columncolor[HTML]{FFFFFF}}c 
>{\columncolor[HTML]{FFFFFF}}c 
>{\columncolor[HTML]{FFFFFF}}c }
\hline
\cellcolor[HTML]{EFEFEF}\textsc{Method} & \multicolumn{3}{c|}{\cellcolor[HTML]{EFEFEF}Darcy Flow (2D)} & \multicolumn{3}{c|}{\cellcolor[HTML]{EFEFEF}Diffusion Reaction (2D)} & \multicolumn{3}{c}{\cellcolor[HTML]{EFEFEF}Shallow Water (2D)} \\ \cline{2-10} 
 & nRMSE$\downarrow$ & max error$\downarrow$ & cRMSE$\downarrow$ & nRMSE$\downarrow$ & max error$\downarrow$ & cRMSE$\downarrow$ & nRMSE$\downarrow$ & max error$\downarrow$ & cRMSE$\downarrow$ \\ \hline
UNet & \cellcolor[HTML]{D9FFD9}0.05412&	\cellcolor[HTML]{D9FFD9}0.1699&	\cellcolor[HTML]{D9FFD9}0.01352  & 0.87007	&0.2265	&0.02115 & 0.11552&	0.75481&	0.03397 \\
FNO & 0.13772&	0.22714&	0.02368 & 0.19243&	0.09539&	0.00154 & 0.01393&	0.13625&	0.00058 \\
UFNO & 0.39773& 	0.56864& 	0.05821 & 0.23746&	0.06644&	0.00617 & 0.01518&	0.2161&	0.00074 \\ 
FFNO & 0.13121&	0.25059&	0.01976 & 0.19817&	0.14103&	0.00829 & 0.01145&	0.12899&	0.00069 \\
Trasnsovler & 0.10777&	0.26815&	0.0144 & 0.24379&	0.10434&	0.00869 & 0.00565&	0.12001&	0.00066 \\ 
LocalFNO & 0.10662&	0.21291&	0.02042 & \textbf{0.05492}&	\textbf{0.05488}&	\textbf{0.00077} & 0.00422&	0.09241&	0.00038 \\ \hline
FINO & \textbf{0.05764}&	\textbf{0.16731}&	\textbf{0.01252} & \cellcolor[HTML]{D9FFD9}0.03645	&\cellcolor[HTML]{D9FFD9}0.03772	&\cellcolor[HTML]{D9FFD9}0.00014 & \cellcolor[HTML]{D9FFD9}0.0025&	\cellcolor[HTML]{D9FFD9}0.03976&	\cellcolor[HTML]{D9FFD9}0.00013 \\
Improvement&  -6.50\% &  -1.52\% &  -7.4\% & 33.63\% & 31.27\% & 81.82\% & 40.76 & 56.97 & 65.79 \\ \hline

\end{tabular}}
\label{2Dextra1}
\end{table*}

\begin{table*}[t]
\caption{
Comparison of baselines and FINO on PDEBench— compressible Navier–Stokes (CNS), 2D Diffusion–Reaction and Shallow Water — measured by RMSE in Fourier space, low frequency regime (fRMSEL) , RMSE in Fourier space, middle frequency regime (fRMSEM) and  RMSE in Fourier space, high frequency regime (fRMSEH); lower is better. The best results are highlighted in \colorbox[HTML]{BBFFBB}{green} while the second best results are in bold font.
}
\centering
\resizebox{1.0\textwidth}{!}{%
\begin{tabular}{c |
>{\columncolor[HTML]{FFFFFF}}c 
>{\columncolor[HTML]{FFFFFF}}c 
>{\columncolor[HTML]{FFFFFF}}c |
>{\columncolor[HTML]{FFFFFF}}c 
>{\columncolor[HTML]{FFFFFF}}c 
>{\columncolor[HTML]{FFFFFF}}c |
>{\columncolor[HTML]{FFFFFF}}c 
>{\columncolor[HTML]{FFFFFF}}c 
>{\columncolor[HTML]{FFFFFF}}c }
\hline
\cellcolor[HTML]{EFEFEF}\textsc{Method} & \multicolumn{3}{c|}{\cellcolor[HTML]{EFEFEF}CNS (1D)} & \multicolumn{3}{c|}{\cellcolor[HTML]{EFEFEF}Diffusion Reaction (2D)} & \multicolumn{3}{c}{\cellcolor[HTML]{EFEFEF}Shallow Water (2D)} \\ \cline{2-10} 
 & fRMSEL$\downarrow$ & fRMSEM$\downarrow$ & fRMSEH$\downarrow$ & fRMSEL$\downarrow$ & fRMSEM$\downarrow$ & fRMSEH$\downarrow$ & fRMSEL$\downarrow$ & fRMSEM$\downarrow$ & fRMSEH$\downarrow$ \\ \hline
UNet & 4.42510&	0.18985&	0.03155  & 0.01446	&0.00605	&0.00153 & 0.03164&	0.00874&	0.00197 \\
FNO & 0.11258	&0.04998	&0.00706 & 0.00161&	0.00120&	0.00064 & 0.00060&	0.00067	&0.00148 \\
UFNO & 0.15604&	0.05499&	\cellcolor[HTML]{D9FFD9}0.00682 & 0.00393&	0.00127&	0.00036& 0.00300&	0.00153&	0.00078 \\ 
FFNO & 0.09931&	0.04275&	\textbf{0.00686} & 0.00361&	0.00115&	0.00044 & 0.00224&	0.00109&	0.00043 \\
Trasnsovler & 0.27045	&0.08406&	0.01283 & 0.00393&	0.0015&	0.00053 & 0.00074&	0.00061&	0.00043 \\ 
LocalFNO & -& -& -& \textbf{0.00053}&	\textbf{0.00044}&	\textbf{0.00021} & 0.00042&	0.00050&	0.00033 \\ \hline
FINO & \cellcolor[HTML]{D9FFD9}0.06596&	\cellcolor[HTML]{D9FFD9}0.03021&	0.00864 & \cellcolor[HTML]{D9FFD9}0.00024	&\cellcolor[HTML]{D9FFD9}0.00026	&\cellcolor[HTML]{D9FFD9}0.00014 & \cellcolor[HTML]{D9FFD9}0.00013&	\cellcolor[HTML]{D9FFD9}0.00027&	\cellcolor[HTML]{D9FFD9}0.00022 \\
Improvement&  33.58\% &  29.33\% &  -26.69\% & 54.72\% & 40.91\% & 33.33\% & 69.05\% & 46\% & 33.33\% \\ \hline

\end{tabular}}
\label{2Dphy}
\end{table*}

Table \ref{1Dextra1} compares baseline methods (U-Net, FNO, UFNO, FFNO, Transolver) with the proposed FINO model on three 1D PDE benchmarks: advection, compressible Navier–Stokes (CNS), and diffusion–reaction. Evaluation is based on three complementary error metrics: normalized RMSE (nRMSE), which ensures scale independence; maximum error, which captures the local worst-case discrepancy and serves as a proxy for stability in time-stepping; and conservation RMSE (cRMSE), which measures the deviation from conserved physical quantities. Across all tasks, FINO consistently outperforms baselines by large margins. For advection, FINO achieves the lowest nRMSE (0.0049), reducing error by 45.07\% relative to the strongest baseline, while also improving stability and conservation. In CNS, FINO delivers the best nRMSE (0.146) and cRMSE (0.065), yielding 11.02\% and 22.37\% improvements, respectively. In the diffusion–reaction system, FINO again provides the lowest nRMSE (0.01137) and cRMSE (0.00729), with a 24.30\% gain. These results demonstrate that FINO not only improves accuracy but also enhances stability and preserves key physical invariants across disparate PDE regimes.

Table \ref{2Dextra1} reports results on three challenging 2D PDE benchmarks—Darcy Flow, diffusion–reaction, and Shallow Water—using normalized RMSE (nRMSE) for scale-independent accuracy, maximum error as a proxy for stability, and conservation RMSE (cRMSE) to quantify deviations from conserved physical quantities. FINO consistently achieves state-of-the-art performance across all tasks. For Darcy Flow, it attains second-best results closely behind UNet. Since Darcy flow is a time-independent PDE and other global method or Local plus Global method perform poorly. In contrast, FINO achieves a very similar performance with UNet. In 2D diffusion–reaction, FINO delivers substantial gains with the lowest errors across all three metrics (nRMSE = 0.03645, cRMSE = 0.00014), surpassing LocalFNO by up to 81.8\% in conservation accuracy. In 2D shallow water, FINO again outperforms all baselines by a wide margin, achieving nRMSE = 0.0025, maximum error = 0.0398 and cRMSE = 0.00013, reflecting improvements of 40–65\%. Since 2D Diffusion Reaction and Shallow Water both have a long time domain, such strong performance indicates FINO can achieve a stable and accurate result in a long time domain. These results highlight FINO’s robustness across diverse PDE regimes, combining local accuracy, numerical stability, and strong adherence to physical conservation laws.

Table \ref{2Dphy} evaluates baselines and FINO on PDEBench using Fourier space errors across three regimes—low (fRMSEL), middle (fRMSEM), and high frequency (fRMSEH)—to measure the fidelity of capturing multi-scale dynamics. In contrast with RMSE and nRMSE, which provide a metric view of the data. These Fourier space errors provide a physical view. Lower values indicate better accuracy in representing frequency components. On CNS, FINO delivers the best performance in the low and middle frequency bands, reducing error by 33.6\% and 29.3\% compared to the strongest baselines, though it is slightly less competitive in the high-frequency regime. For 2D diffusion–reaction, FINO achieves substantial improvements across all bands, with an 81.8\% relative gain in the high-frequency regime, highlighting its ability to preserve fine-scale oscillatory structures. In the Shallow Water system, FINO again attains the lowest errors across all frequency bands, with improvements of 69\% in low frequencies, 46\% in middle frequencies, and 33\% in high frequencies, demonstrating superior resolution of both large-scale flows and small-scale turbulent components. These results emphasize FINO’s strong capacity to resolve multi-scale PDE dynamics in Fourier space, outperforming both spectral and local baselines across diverse regimes.

\section{Use of Large Language Models (LLMs)}
During the preparation of the paper, LLMs were used to polish part of the writing.

\begin{figure}[t!]
    \centering
    \includegraphics[width=\linewidth]{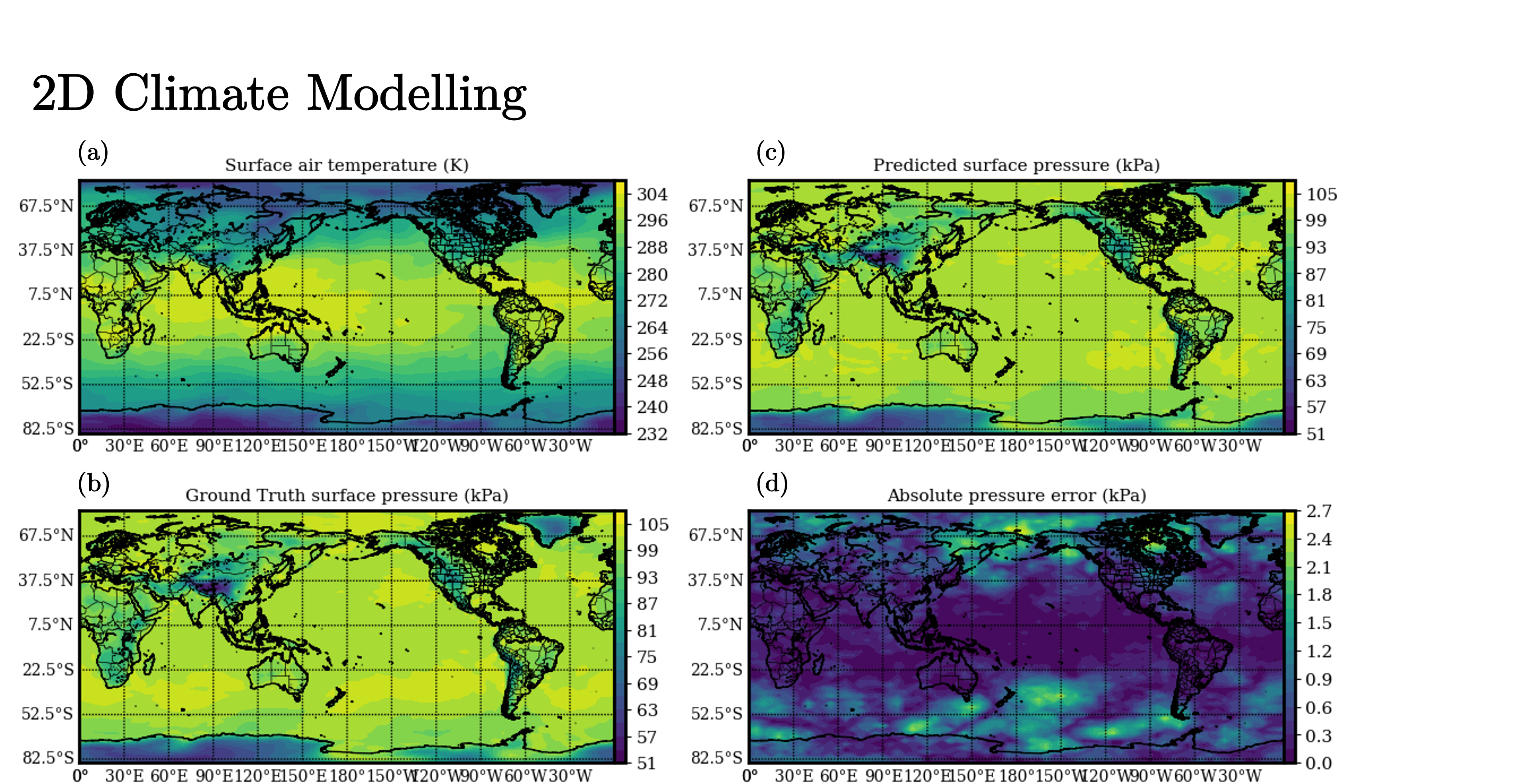}
    \caption{Comparison of full-resolution predictions and baseline on the climate modeling benchmark. (a) Input surface air temperature field. (b) Ground-truth surface pressure. (c) Predicted surface pressure from FINO. (d) Absolute error map between prediction and ground truth.
    }
    \label{fig:fu5}
\end{figure}

\begin{figure}[t!]
    \centering
    \includegraphics[width=\linewidth]{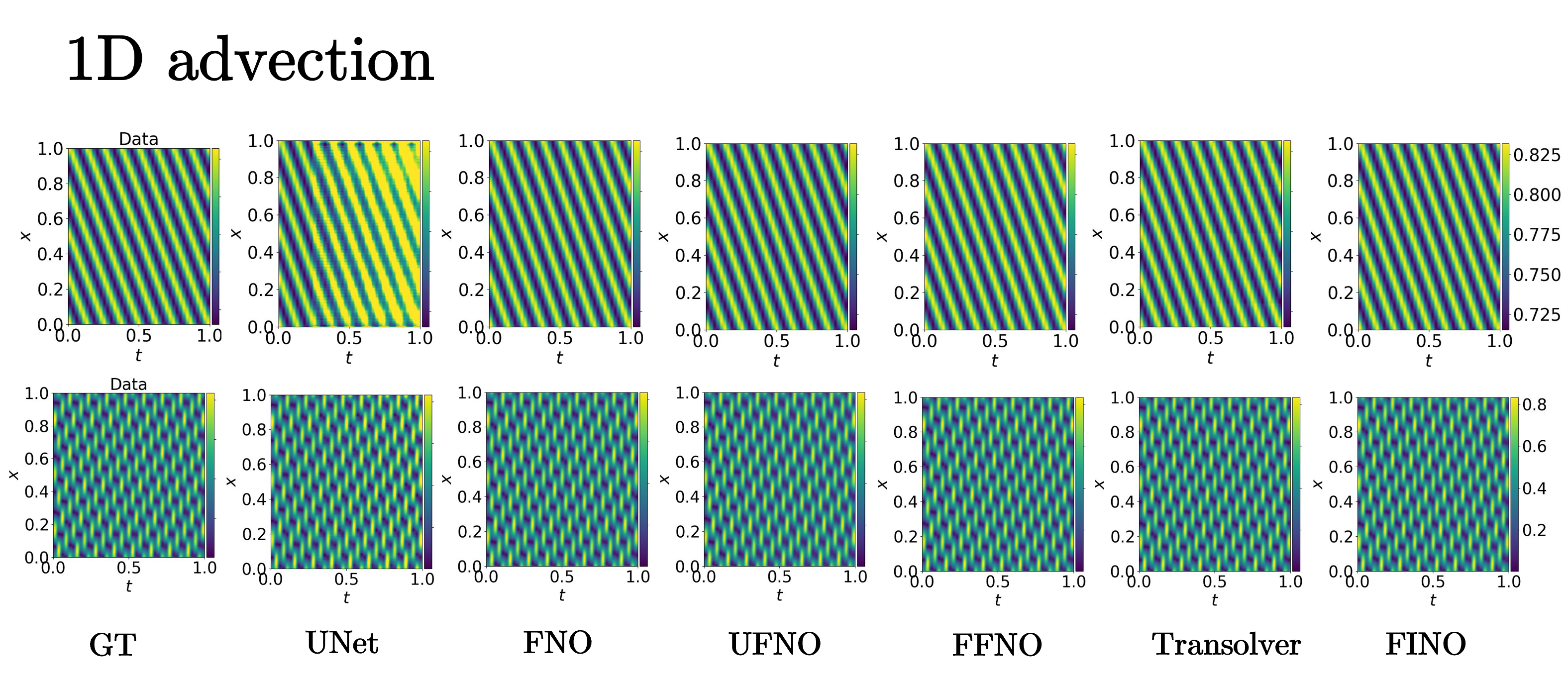}
    \caption{Visualization of 1D advection across different baselines on two samples.
    }
    \label{fig:fu5}
\end{figure}

\begin{figure}[t!]
    \centering
    \includegraphics[width=\linewidth]{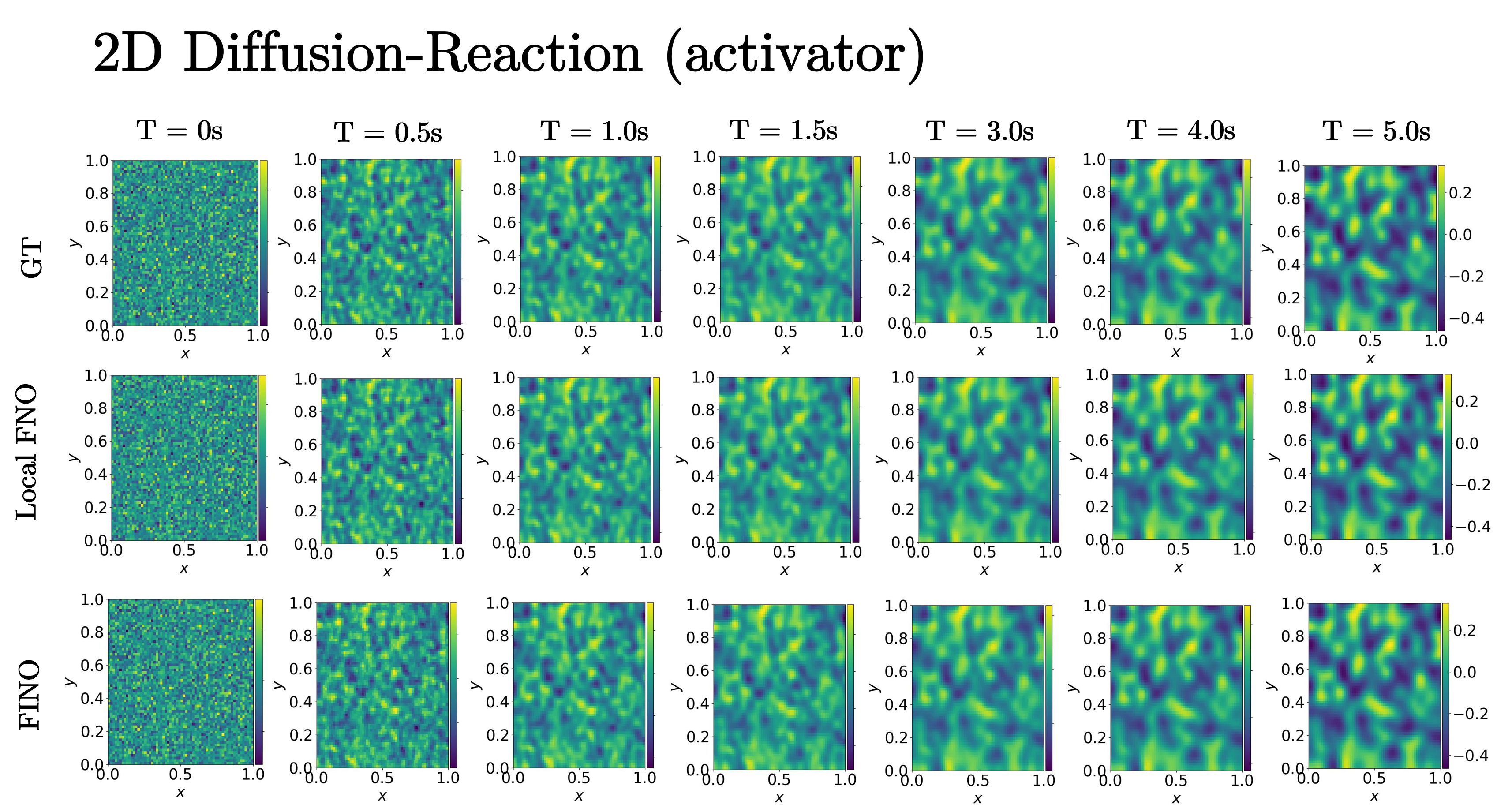}
    \caption{Qualitative Comparison of 2D Diffusion–Reaction  (Activator) Across GT, Local FNO, and our method
    }
    \label{fig:fu2}
\end{figure}

\begin{figure}[t!]
    \centering
    \includegraphics[width=\linewidth]{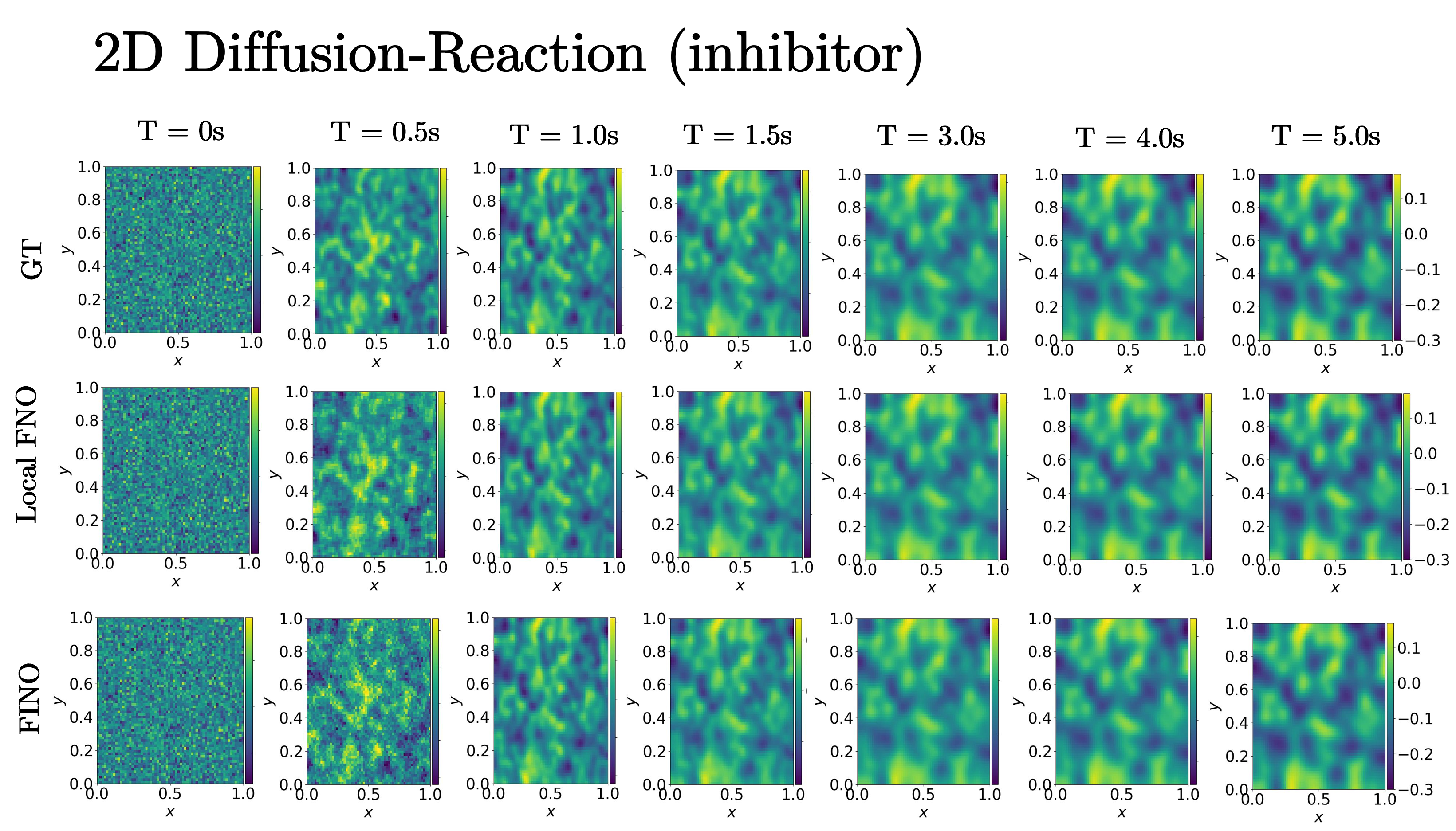}
    \caption{Qualitative Comparison of 2D Diffusion–Reaction  (inhibitor) Across GT, Local FNO, and our method
    }
    \label{fig:fu2}
\end{figure}

\end{document}